\newcolumntype{M}[1]{>{\centering\arraybackslash}m{#1}}
\theoremstyle{plain}
\newtheorem{theorem}{Theorem}[section]
\newtheorem{proposition}[theorem]{Proposition}
\newtheorem{lemma}[theorem]{Lemma}
\newtheorem{corollary}[theorem]{Corollary}
\theoremstyle{definition}
\newtheorem{definition}[theorem]{Definition}
\theoremstyle{remark}
\DeclareMathOperator{\Tr}{Tr}
\newcommand{\EE}[2][]{\mathbb{E}_{#1}\left[#2\right]}
\newcommand{\Cov}[2][]{\mathrm{Cov}_{#1}\left[#2\right]}
\newcommand{\tr}[1]{\Tr\left(#1\right)}
\newcommand{\vect}[1]{{\rm vec}\left(#1\right)}
\newcommand{\Diag}[1]{{\rm Diag}\left(#1\right)}
\definecolor{ForestGreen}{rgb}{0,0.27,0.13}
\newcommand{\cmark}{{\color{ForestGreen}\ding{51}}}
\newcommand{\xmark}{{\color{red}\ding{55}}}
\title{ On the Limited Representational Power of Value Functions and its Links to Statistical (In)Efficiency}
\author{David Cheikhi  \\
    d.cheikhi@columbia.edu \\
    Graduate School of Business\\
    Columbia University
    \And
    Daniel Russo \\
    Graduate School of Business\\
    Columbia University}
\begin{document}

\maketitle

\begin{abstract}

Identifying the trade-offs between model-based and model-free methods is a central question in reinforcement learning. Value-based methods offer substantial computational advantages and are sometimes just as statistically efficient as model-based methods. However, focusing on the core problem of policy evaluation, we show information about the transition dynamics may be impossible to represent in the space of value functions. We explore this through a series of case studies focused on structures that arises in many important problems. In several, there is no information loss and value-based methods are as statistically efficient as model based ones. In other closely-related examples, information loss is severe and value-based methods are severely outperformed. A deeper investigation points to the limitations of the representational power as the driver of the inefficiency, as opposed to failure in algorithm design.

\end{abstract}

\section{Introduction}
\label{sec:introduction}

Policy evaluation is central to sequential decision making. Two main frameworks compete to tackle this problem. Model based methods start by learning the underlying dynamics of the problem (the model) and output a value-to-go estimate consistent with this learned model. While this approach enjoys nice theoretical properties (see e.g. Chapter 7 of \cite{moerland2023model}),  computing the value-to-go at a single state typically requires computationally expensive forward simulation  of the model. At the other end of the spectrum, model-free methods (or value-based) methods directly estimate the value function that fits data best, skipping the need for the difficult step of computing the value function corresponding to a model. As a result, direct 
 model-free estimation of value functions is ubiquitous in reinforcement learning. 

When they work well, model-free approaches to value function estimation can be statistically and computationally efficient while sidestepping the need to estimate details of an underlying model. Unfortunately, model-free algorithms are sometimes sample inefficient. 

\paragraph{Puzzling case-studies on the statistical efficiency of model-free methods.}
In studying this sample efficiency tradeoff in simple examples, we encounter puzzling competing evidence. Consider Figure  \ref{fig:teaser}. It compares the error of least-squares temporal difference learning \citep{bradtke1996linear} against simple model-based least-squares baseline in three cases. Section \ref{sec:LDS} provides further details. Subfigure \ref{fig:teaser_quad} looks at a linear quadratic control. As in \cite{tu2018least}, we see LSTD is severely outperformed by model-based. \cite{recht2019tour} conjectures that the issue is that model-based algorithms leverage vector state observations, whereas LSTD collapses these into scalar observations (of reward plus value-to-go). Subfigure \ref{fig:teaser_lin} raises doubt about this interpretation. LSTD is efficient again if we adjust the linear-quadratic setting so that rewards are linear, rather than quadratic, functions of state. Why is the sample-efficiency so impacted by this change to the problem?

In Subfigure \ref{fig:teaser_diag}, we restrict the model dynamics to be diagonal. This structure is motivated by problems with decoupled structure --- which we study in detail in Section \ref{sec:DMRP} and we explain is a very important feature of many practical problems. We see that model-based are again far superior.

\captionsetup[subfigure]{format=hang}
\begin{figure}[h!]
\centering
\captionsetup[subfigure]{justification=centering}
{\begin{subfigure}[b]{0.35\columnwidth}
\centering
\begin{tikzpicture}
\begin{axis}[
            title={},
            xmin=0,xmax=22,
            ymin=0,ymax=2300,
            width=5cm,
            height=5cm,
            table/col sep=comma,
            xlabel = Dimension $d$,
            ylabel = MSE $\EE{\|\hat{\beta} -\beta \|^2}$,
            grid=both,
            legend pos = north west]

\addplot [black,mark=*,only marks, mark options={scale=0.5}, error bars/.cd, y explicit,y dir=both] table[x=d,y=Model-free_empirical, y error plus expr=\thisrow{CI_MF_UB} - \thisrow{Model-free_empirical} , y error minus expr=\thisrow{Model-free_empirical} - \thisrow{CI_MF_LB}] {Data/quad_experiment.csv};
\addlegendentry{LSTD};
\addplot [red,mark=+,only marks, mark options={scale=0.5}, error bars/.cd, y explicit,y dir=both] table[x=d,y=Model-based_empirical, y error plus expr=\thisrow{CI_MB_UB} - \thisrow{Model-based_empirical} , y error minus expr=\thisrow{Model-based_empirical} - \thisrow{CI_MB_LB}] {Data/quad_experiment.csv};
\addlegendentry{Model Based};
\end{axis}
\end{tikzpicture}
\caption{General dynamics \\ Quadratic rewards\label{fig:teaser_quad}}
\end{subfigure}}
{\begin{subfigure}[b]{0.28\columnwidth}
\centering
\begin{tikzpicture}
\begin{axis}[
            title={},
            xmin=0,xmax=55,
            ymin=0,ymax=340,
            width=5cm,
            height=5cm,
            table/col sep=comma,
            xlabel = Dimension $d$,
            grid=both,
            legend pos = north west]

\addplot [black,mark=*,only marks, mark options={scale=0.5}, error bars/.cd, y explicit,y dir=both] table[x=d,y=Model-free_empirical, y error plus expr=\thisrow{CI_MF_UB} - \thisrow{Model-free_empirical} , y error minus expr=\thisrow{Model-free_empirical} - \thisrow{CI_MF_LB}] {Data/lin_experiment.csv};
\addlegendentry{LSTD};
\addplot [red,mark=+,only marks, mark options={scale=0.5}, error bars/.cd, y explicit,y dir=both] table[x=d,y=Model-based_empirical, y error plus expr=\thisrow{CI_MB_UB} - \thisrow{Model-based_empirical} , y error minus expr=\thisrow{Model-based_empirical} - \thisrow{CI_MB_LB}] {Data/lin_experiment.csv};
\addlegendentry{Model Based};

\end{axis}
\end{tikzpicture}
\caption{General dynamics\\ Linear rewards\label{fig:teaser_lin}}
\end{subfigure}}
{\begin{subfigure}[b]{0.28\columnwidth}
\centering
\begin{tikzpicture}
\begin{axis}[
            title={},
            xmin=0,xmax=55,
            ymin=0,ymax=340,
            width=5cm,
            height=5cm,
            table/col sep=comma,
            xlabel = Dimension $d$,
            grid=both,
            legend pos = north west]

\addplot [black,mark=*,only marks, mark options={scale=0.5}, error bars/.cd, y explicit,y dir=both] table[x=d,y=Model-free_empirical, y error plus expr=\thisrow{CI_MF_UB} - \thisrow{Model-free_empirical} , y error minus expr=\thisrow{Model-free_empirical} - \thisrow{CI_MF_LB}] {Data/linear_diagonal.csv};
\addlegendentry{LSTD};
\addplot [red,mark=+,only marks, mark options={scale=0.5}, error bars/.cd, y explicit,y dir=both] table[x=d,y=Model-based_empirical, y error plus expr=\thisrow{CI_MB_UB} - \thisrow{Model-based_empirical} , y error minus expr=\thisrow{Model-based_empirical} - \thisrow{CI_MB_LB}] {Data/linear_diagonal.csv};
\addlegendentry{Model Based};

\end{axis}
\end{tikzpicture}
\caption{Diagonal dynamics\\ Linear rewards\label{fig:teaser_diag}}
\end{subfigure}}
\caption{Mean-squared error of LSTD and model based estimators when dynamics are linear and rewards are quadratic (\ref{fig:teaser_quad}), when rewards are simplified to be linear (\ref{fig:teaser_lin}) and when dynamics are further simplified to be diagonal (\ref{fig:teaser_diag}).\label{fig:teaser}}
\end{figure}
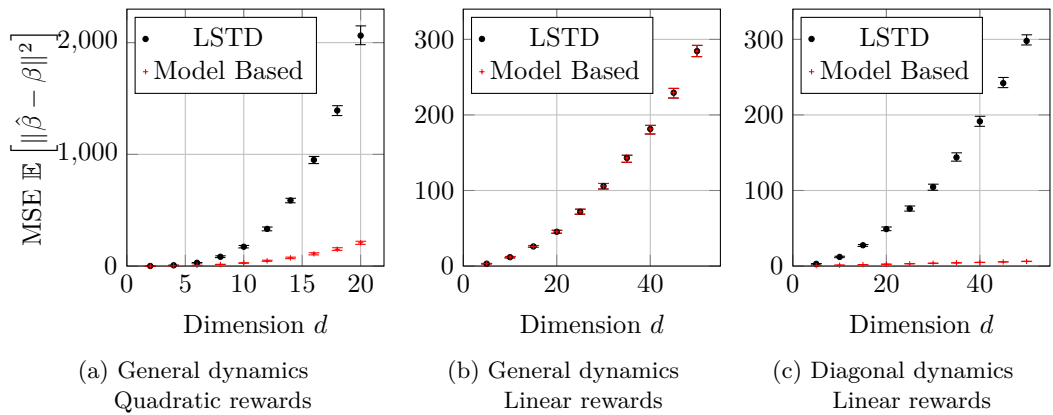

All scenarios may initially seem quite favorable to model-free methods, since the structure of the model implies a simple structure to the true value function. Yet (batch) LSTD is grossly sample inefficient in two cases; in one other, it is just as efficient as model-based methods. 

\paragraph{Limited representational power of value functions.}
 We offer an explanation of the puzzling scenarios plotted above. The issue is not with LSTD itself, but with a loss of information when translating a simplifying structure of the model dynamics to an associated (simplifying) structure of the value-function representation. 
\begin{quote}
    \emph{Model free estimation algorithms suffer inherent information loss when  information about the transition dynamics cannot be encoded in a value function representation. } 
\end{quote}
To  understand this precisely, it is helpful to abstract away from the details of LSTD. We view it as some rule for estimating value functions based on data. When specific problem structures are known, its behavior can be improved by supplying a specialized value function representation, e.g. by taking $\mathbb{V}$ below to consist of quadratic value functions.

\begin{definition}\label{def:modelfree}
    A model-free value function estimation algorithm (e.g. LSTD) is a function that maps a class of value functions $\mathbb{V}$ and a dataset $D= \{(s_i,r_i,s'_i) : i=1,\ldots, n\}$ consisting of state, reward, next-state tuples, to an estimated value function $\hat{V} \in \mathbb{V}$.
\end{definition}

The next definition considers the conversion from a class of MRP models $\mathbb{M}$ to the induced class of value functions $\mathbb{V} =\{ V_{M}: M \in \mathbb{M}\}$ where $V_M$ is the value function associated with model $M$. We say that the value function representations lose information about the model structure if $\mathbb{V}$ is also induced by a larger class of models $\mathfrak{M}$. Roughly, a generic algorithm like LSTD cannot `know' or exploit a structure of the class of models $\mathbb{M}$ that is not present in $\mathfrak{M}$ .

\begin{definition}[Loss of information]\label{def:info-loss}
    For a class of models $\mathbb{M}$, we say there is a loss of information in the space of value functions if there exists a larger set of models $\mathfrak{M}$, that is $\mathbb{M} \subsetneq \mathfrak{M}$, such that $\mathbb{M}$ and $\mathfrak{M}$ have the same class of value functions:
    \begin{equation*}
        \{V_M; M \in \mathbb{M} \} = \{V_M; M \in \mathfrak{M}\}.
    \end{equation*}
\end{definition}

A deeper look into the examples in Figure \ref{fig:teaser} seems to confirm that information-loss is the true driver of LSTD's poor performance. In fact, in both the diagonal dynamics case in Figure \ref{fig:teaser_diag} and the linear-quadratic example in Figure \ref{fig:teaser_quad}, we explain that LSTD is  \emph{equivalent} to a model-based least-squares estimation procedure that operates in a much larger space of models that shares the same value-function representation, like $\mathfrak{M}$ in Definition \ref{def:info-loss}. Therefore, our theory offers an alternative explanation to the findings of \cite{tu2019gap}.

As depicted in Table \ref{table:summary}, the presence of severe information loss predicts the relative statistical efficiency or inefficiency of model-free methods in five important examples. In addition to the linear problems in Figure \ref{fig:teaser}, we also study policy evaluation in problems with many state components that evolve independently. This structure is ubiquitous in operations research, see e.g. Chapter 6.2 of \cite{Bert05}, and also appears naturally in other domains like recommender systems \citep{maystre2023optimizing}. The very poor performance of generic model-free algorithms on simple variants of such problems is disconcerting.
 
\begin{table}[ht]
\centering
\begin{tabular}{|M{4cm}|M{2.5cm}|M{4cm}|M{1cm}|M{1.5cm}|}
\hline
 Model class & Value functions consistent with model class& Models consistent with  value function class &No info. loss& LSTD is data efficient\\
\hline
All MRPs& All functions&  All MRPs  & \cmark & \cmark \\ \hline 
Decoupled transitions; additive rewards& Linearly separable functions& MRPs with arbitrary transition structure & \xmark &  \xmark \\ \hline 
\emph{Decoupled} linear dynamics; linear rewards& Linear functions& MRPs with \emph{general}  linear dynamics and rewards & \xmark & \xmark \\ \hline 
General linear dynamics; linear rewards& Linear functions& (essentially) MRPs with general linear  transition structure& \cmark & \cmark\\ \hline 
 LQR: general linear dynamics; \emph{quadratic} rewards  & Quadratic functions& A class of MRPs with $\Theta(n^4)$ free parameters (instead of $\Theta(n^2)$). & \xmark & \xmark\\ \hline
\end{tabular}
\caption{In the settings studied throughout this paper, there is a perfect correlation between LSTD's relative statistical inefficiency and whether there is los of information in the space of value functions.}
\label{table:summary}
\end{table}

It is worth stressing that all of our examples are purposefully stylized and meant to serve as a sanity check on the performance of model-free value function estimation procedures.  It is usually easy to modify algorithms to perform well on a particular example, but it is unfortunate that these case-by-case modifications are necessary.

\section{Related work}
Results about sample efficiency (or inefficiency) of both model-free \citep{jin2020provably, chen2021infinite} and model-based \citep{kearns2002near, auer2008near, azar2017minimax} are numerous in the literature. In specific settings, the statistical efficiency of model-free and model-based methods have been compared, showing there is nearly no gap between the two frameworks in the tabular case \citep{zanette2019tighter, jin2018q} but that such a gap exists in the LQR setting \citep{tu2019gap}. The results from these works are setting-specific and provide little intuition on what characteristic of the problem at hand drives the performance discrepancies in Figure \ref{fig:teaser}. 

\cite{sun2019model} studies this issue in the full RL problem, where good performance requires exploration and planning and not just policy evaluation. They show there exist a setting in which any model-free method requires exponentially more samples to match the performance of a model-based method. While superficially similar, our paper offers insights that are very different and complementary. Our definition of model-free policy evaluation methods (Def~\ref{def:modelfree}) restricts what problem structures are known apriori to the learner, leading to our emphasis on information-loss due to value-based representation. Roughly speaking, we're worried that a sound value-function based representation is necessarily ``too flexible'' to reflect structural constraints on the transition dynamics. 
The definition in \cite{sun2019model} instead restricts what model-free methods can observe about the dataset itself if the value-function class is \emph{insufficiently flexible}. In particular, their model-free methods only have access to the image of states under value functions in some class. Their counter-example constructs a setting where important features of the data are lost as a result. This definition of model-free methods is interesting, but  is vacuous  in the settings discussed in Table \ref{table:summary},  since for the data could be perfectly recovered from sufficiently many value function evaluations.

\cite{zhu2023representation} and \cite{dong2020expressivity} exhibit settings where value functions require very complex representations even though the the transition dynamics are simple. They argue this makes model-free methods inefficient. This is orthogonal to the limitations in representation power described in this work which refers to what information about the model space can be inferred from the sole knowledge of the space of value functions. All of our counter-examples stress the potential inefficiency of value-based methods even when the value-function has known and simple representation. 

Other works study benefits or shortcomings of both types of methods, such as robustness to misspecification \citep{clavera2018model, jin2020provably, wang2020provably, zhu2023provably} or safety \citep{berkenkamp2017safe, xu2023uniformly}.

Attempting to reconcile the benefits of model-free methods with the statistical efficiency of model-based methods, multiple works have explored using a learned model as a generator to enhance statistical efficiency of model-free methods, both theoretically \citep{feinberg2018model, buckman2018sample, janner2019trust} and empirically \citep{lu2024synthetic, young2022benefits}.

\section{Problem formulation}
\label{subsec:VFE}

A \emph{trajectory} in a Markov Reward Process (MRP) is a Markovian sequence $\tau=(S^{(0)}, R^{(1)}, S^{(1)}, R^{(2)}, S^{(2)},  \dots),$ consisting of a sequence of states $(S^{(t)})_{t} \subset \mathcal{S}$ and rewards $(R^{(t)})_{t}\subset \mathbb{R}$. MRPs are the common framework for policy evaluation: evaluating a fixed policy in a Markov Decision Process can be cast as evaluating the value function of a MRP. The law of a Markov Reward Process is specified by the tuple $\mathcal{M} = (\mathcal{S}, P, R)$ consisting of a state space, a transition kernel and a reward distribution. Here $P$ is a transition kernel over the state space $\mathcal{S}$, specifying a probability $P(s' \mid s)$ of transitioning from $s$ to $s'$. The object $R$ specifies the distribution of rewards conditioned on a state transition as $R(dr | s,s') = \mathbb{P}(R^{(t)} = dr \mid S^{(t)}=s, S^{(t+1)}=s')$. Throughout we use the notation $r(s,s')$ for the mean of $R(\cdot | s,s')$. For a given discount factor $0 < \gamma \leq 1$, the value function
\begin{align*}
V(s) = \mathbb{E}\left[ \sum_{t=1}^{\infty} \gamma^t R^{(t)} \mid S^{(0)}=s \right] = \mathbb{E}\left[ \sum_{t=1}^{\infty}  \gamma^t r\left(S^{(t)}, S^{(t+1)}\right) \mid S^{(0)}=s \right]
\end{align*}
 specifies the expected sum of discounted rewards along a trajectory. We focus on the problem of estimating the value function $V$ given a dataset of transitions $\mathcal{D} = \left\{ \left((S^{(t)}, R^{(t)}, S^{(t)}_{\rm next} \right) \right\}_{t=1, \dots, n}$ where $S^{(t)}_{\rm next} \sim P(\cdot | S^{(t)})$ and $R^{(t)} \sim R(\cdot | S^{(t)}, S^{(t+1)})$. This allows for the case where $S^{(t)}$ are generated from a single trajectory (in which case $S^{(t+1)} = S^{(t)}_{\rm next}$), from multiple trajectories or drawn i.i.d. from the occupancy measure.

\section{Markov Reward Processes with Decoupled Transition Structure}
\label{sec:DMRP}

\subsection{On the importance of decoupled structures}

The curse of dimensionality is a major barrier in Reinforcement Learning as the number of parameters that need to be learned typically scales exponentially with the dimension of the state space. Thankfully, it is common that all parts of the state cannot interact arbitrarily in practical problems. 
For instance, \cite{maystre2023optimizing} model  recommender systems where a component of the user's state represents their habitual engagement  user with specific audio items (e.g podcasts). The presence of hundreds of thousands of items means state variables have hundreds of thousands of components. Thanfuklly recommending a new podcast might primarily affect the few components related to the user's listening habit with that podcast.

Given the importance and omnipresence of these problems in practice, a large literature has studied problems having some form of decoupling or weak coupling. Factored Markov Decision Processes \citep{kearns1999efficient} or weakly coupled Markov Decision Processes \citep{hawkins2003langrangian} are two common frameworks for capturing such structure.  
Examples of real-life problems cast in these frameworks include inventory management \citep{turken2012multi, chen2021primal}, queuing \citep{chen2021primal}, resource allocation \citep{tesauro2005online}, or restless bandits \citep{weber1990index},  personalized health interventions \citep{baek2023policy}, and many others. 

In what follows, we restrict our study to the most elementary version of this structure where all components are perfectly independent and the rewards can be linearly decomposed. Even though some weakly coupled problems can be tackled by solving a series of fully decoupled problems as proposed by \cite{le2019batch}, we do not claim to exactly capture practical problems through this construction. Instead, we view decoupled MRPs as a minimal example capturing the essence of those problems. That LSTD fails to be statistically efficient even in this case is noteworthy. Given the simplicity of our example, we expect that the exhibited failures will continue or be made worse in more complex settings.

\subsection{Definition}
A $d$-decoupled MRP $\mathcal{M}$ is a system where the state and reward are respectively the concatenation of the states and the sum of rewards of $d$ MRPs $\mathcal{M}_i  = (\mathcal{S}_i, P_i, R_i)$ evolving independently.
\begin{definition}[Decoupled MRP] 
    A MRP $\mathcal{M} = (\mathcal{S}, P, R)$ is said to be $d-decoupled$ if
    \begin{itemize}
        \item There exists $\mathcal{S}_1, \dots, \mathcal{S}_d$ such that $\mathcal{S} = \mathcal{S}_1 \times \dots \times \mathcal{S}_d$.
        \item There exists transition kernels $P_1, \dots P_d$  and reward kernels $R_1, \dots R_d$  such that for all states $s, s'$, $P(s' | s )  = \prod_{i=1}^d P_i (s_i' | s_i)$ and $r(s, s')  = \sum_{i = 1}^d r_i (s_i , s_i')$.
    \end{itemize}
\end{definition}

 In the rest of this section, we restrict ourselves to the case of tabular decoupled MRPs, that is each $\mathcal{S}_i$ is finite. The following property of decoupled MRPs is central to the use of value-based methods in this setting: the value function of a decoupled MRP is linearly separable, that is there exists $V_1, \dots, V_d$ such that
\begin{equation}\label{eq:separable_v}
    V(s) = \sum_{i=1}^d V_i(s_i) \quad \text{for all } s\in \mathcal{S}.
\end{equation}
Let $\mathbb{M}_D$ denote the set of decoupled MRPs and $\mathbb{V}_D$ denote the set of separable value functions, that is the value functions $V$ that verify \eqref{eq:separable_v} for some $V_1, \dots, V_d$. Then $\mathbb{V}_D$ is exactly the class of value functions induced by decoupled MRPs.

\begin{proposition} \label{prop:dec_is_sep}
    The class of value functions of decoupled MRPs is the set of separable value functions: $\{V_M | M \in \mathbb{M}_D \} = \mathbb{V}_D$.
\end{proposition}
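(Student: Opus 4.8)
The plan is to prove the claimed set equality by establishing the two inclusions separately. The forward inclusion $\{V_M : M \in \mathbb{M}_D\} \subseteq \mathbb{V}_D$ is the statement already previewed in \eqref{eq:separable_v}: every value function of a decoupled MRP is separable. First I would expand the definition of $V_M$ using the additive reward decomposition $r(s,s') = \sum_{i=1}^d r_i(s_i, s_i')$ together with linearity of expectation to write $V(s) = \sum_{i=1}^d \mathbb{E}[\sum_{t\ge 1}\gamma^t r_i(S_i^{(t)}, S_i^{(t+1)}) \mid S^{(0)} = s]$. The key observation is that because $P(s'\mid s) = \prod_i P_i(s_i'\mid s_i)$ factorizes, the marginal coordinate process $(S_i^{(t)})_t$ is itself a Markov chain with kernel $P_i$ started from $s_i$ and independent of the other coordinates. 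Hence the $i$-th inner expectation depends on $s$ only through $s_i$ and equals $V_i(s_i)$, the value function of $\mathcal{M}_i$; this yields $V = \sum_i V_i \in \mathbb{V}_D$. The interchange of the expectation with the infinite sum is routine for $\gamma < 1$ with bounded (tabular) rewards by absolute convergence.

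The reverse inclusion $\mathbb{V}_D \subseteq \{V_M : M \in \mathbb{M}_D\}$ is the substantive direction and is where I would spend most of the effort: given any separable $V(s) = \sum_{i=1}^d V_i(s_i)$, I must exhibit a decoupled MRP whose value function is exactly $V$. The natural strategy is to realize each scalar target $V_i$ as the value function of a tabular component MRP $\mathcal{M}_i$ and then take their decoupled product. To do so I would fix an arbitrary transition kernel $P_i$ on $\mathcal{S}_i$ and solve the Bellman equation for the reward: writing $\bar r_i$ for the vector of state-expected rewards, the requirement $V_i = \bar r_i + \gamma P_i V_i$ is satisfied by the explicit choice $\bar r_i = (I - \gamma P_i) V_i$. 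Setting $r_i(s_i, s_i') := \bar r_i(s_i)$, a reward depending only on the current coordinate, then makes $V_i$ the value function of $\mathcal{M}_i$, and applying the already-proved forward inclusion to the resulting decoupled MRP gives $V_M = \sum_i V_i = V$.

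The main obstacle is ensuring this construction is always well posed. For $\gamma < 1$ the matrix $I - \gamma P_i$ acts on the finite-dimensional space of functions on $\mathcal{S}_i$ and is invertible, so $\bar r_i = (I-\gamma P_i)V_i$ is an ordinary finite vector and the induced value function is well defined; this is the case I would treat as primary. The only delicate point is the undiscounted case $\gamma = 1$, where the infinite-horizon return need not converge under an arbitrary $P_i$. There I would instead augment each $\mathcal{S}_i$ with an absorbing zero-reward terminal state (or otherwise impose a proper/transient structure), so that the per-coordinate return is a finite sum and the system $V_i = \bar r_i + P_i V_i$ again admits a reward solution realizing $V_i$. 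Modulo this standard caveat, combining the two inclusions establishes the equality.
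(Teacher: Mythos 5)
Your proof is correct and follows essentially the same two-inclusion approach as the paper: the forward direction via the componentwise decomposition of the value function of a decoupled MRP, and the reverse direction by realizing each $V_i$ as the value function of a component MRP $\mathcal{M}_i$ and concatenating. The only difference is that you make explicit, via the reward choice $\bar r_i = (I - \gamma P_i)V_i$, the existence claim that the paper's proof simply asserts (``there exists MRPs with value function $V_i$''), which is the same construction the paper itself invokes in the proof of Theorem \ref{thm:many_sep}.
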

\vspace{-0.2in}
\begin{proof} \boldmath $\{V_M | M \in \mathbb{M}_D \} \subset \mathbb{V}_D$ \unboldmath: Let $\mathcal{M}$ be a decoupled MRP, concatenation of $\mathcal{M}_1, \dots, \mathcal{M}_d$ with value functions $V_1 \dots, V_d$, respectively. Then $V(s) = \sum_{i=1}^d V_i(s_i)$ for all  $s \in \mathcal{S}$.

\boldmath $\mathbb{V}_D \subset \{V_M | M \in \mathbb{M}_D \}$\unboldmath: Let $V \in \mathbb{V}_D$ and $V_1, \dots, V_d$ be such that \eqref{eq:separable_v} is verified. There exists MRPs $\left(\mathcal{M}_i = (\mathcal{S}_i, P_i, R_i)\right)_{i=1,\dots, d}$ with value function $V_i$. The decoupled MRP obtained by concatenating $\left(\mathcal{M}_i\right)_{i = 1, \dots, d}$ has value function $V$.
\end{proof}

\subsection{Decoupled structure cannot be encoded in the value space}

Proposition \ref{prop:dec_is_sep} shows that the class of separable value functions is \textbf{exactly} the right class of value functions for decoupled MRPs. However, this is not sufficient for the decoupled structure of the transitions to be encoded in the space of value functions, as the following result shows.

\begin{theorem}[Value representation loses information]     \label{thm:many_sep} 
$\left(\mathbb{M}_D \subsetneq \{ M : V_M \in \mathbb{V}_D \}\right)$ \\
For any separable value function $V \in \mathbb{V}_D$ and any transition kernel $P \in \mathbb{R}^{n^d \times n^d}$ over states $\mathcal{S}$, there exists a vector of average rewards $r$ such that the value function of any MRP with transition kernel $P$ and average rewards $r$ is separable.
\end{theorem}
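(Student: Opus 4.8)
The plan is to prove the (strong) statement by an explicit reward construction, exploiting the fact that a value function constrains a model only through the one-step Bellman relation, in which the transition kernel $P$ enters solely via the \emph{expected} immediate reward out of each state. The easy inclusion $\mathbb{M}_D \subseteq \{M : V_M \in \mathbb{V}_D\}$ is just Proposition~\ref{prop:dec_is_sep}; the real content is to show that \emph{any} kernel $P$ — in particular a fully coupled one that does not factor as a product $\prod_i P_i(s_i'\mid s_i)$ — can be paired with suitable average rewards so that its value function is a prescribed separable $V$.

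First I would recall the fixed-point characterization of the value function in the tabular case. Writing $V \in \mathbb{R}^{n^d}$ for the value vector and $\bar r(s) = \sum_{s'} P(s'\mid s)\, r(s,s')$ for the expected one-step reward out of $s$, the value function of the MRP $(\mathcal{S}, P, r)$ is the unique solution (for $\gamma<1$) of the linear system
\begin{equation*}
V = \bar r + \gamma P V
\end{equation*}
(up to the paper's discounting convention, which only rescales $\bar r$ by a constant). The key observation is that $P$ and $r$ determine $V$ only through the single aggregated vector $\bar r$: the detailed dependence of $r$ on the next state $s'$ is invisible to the value function.

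Given the target separable $V \in \mathbb{V}_D$ and an arbitrary kernel $P$, I would then solve this system for the rewards rather than for $V$. Setting
\begin{equation*}
\bar r := (I - \gamma P)\, V
\end{equation*}
and realizing this expected-reward vector through state-only rewards $r(s,s') := \bar r(s)$ — so that $\sum_{s'} P(s'\mid s)\, r(s,s') = \bar r(s)$ for every $s$ — the vector $V$ satisfies $V = \bar r + \gamma P V$ by construction. Hence $V$ is exactly the value function of $(\mathcal{S}, P, r)$, and it is separable by hypothesis, so this MRP lies in $\{M : V_M \in \mathbb{V}_D\}$ for \emph{any} choice of $P$. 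Strictness of the inclusion then follows by taking any $P$ that is not of product form: such an MRP is not decoupled, yet by the above it has a separable value function, giving $\mathbb{M}_D \subsetneq \{M : V_M \in \mathbb{V}_D\}$.

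I expect the only real subtlety — rather than a genuine obstacle — to be the degenerate case $\gamma = 1$, where $I - \gamma P = I - P$ is singular (since $P\mathbf{1} = \mathbf{1}$) and the infinite-horizon value need not be well defined. There one should either restrict to $\gamma < 1$ or impose the standard absorbing/ergodicity assumptions guaranteeing a well-posed $V$, after which the same $\bar r = (I - P)V$ construction goes through. The conceptual crux, which the construction makes transparent, is that value functions ``wash out'' the transition structure by averaging rewards over next states, so an essentially arbitrary $P$ can always be compensated by an appropriate choice of $r$; beyond the averaging identity for $\bar r$ there is nothing deep to verify.
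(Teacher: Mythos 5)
Your proposal is correct and uses essentially the same argument as the paper: the paper's proof also simply sets $r = (I-\gamma P)V$ so that the Bellman fixed-point equation forces the value function of $(\mathcal{S},P,r)$ to equal the prescribed separable $V$. Your additional remarks (realizing the average rewards via $r(s,s') = \bar r(s)$, the explicit strictness step with a non-product $P$, and the $\gamma=1$ caveat) are sound elaborations of details the paper leaves implicit, not a different route.
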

\begin{proof}
  Let $V \in \mathbb{V}_D$ and let $P \in \mathbb{R}^{n^d \times n^d}$ be any transition kernel. Setting $r = (I-\gamma P)V$ ensures that the value function of the MRP with transition kernel $P$ and linear reward $r$ is exactly $V$.
\end{proof}
Theorem \ref{thm:many_sep} shows that even if the set of value functions has been narrowed down as much as possible, this same class of value function is shared by another class of models whose dimension is exponentially larger. 

\subsection{Statistical inefficiency of LSTD}
\label{sec:par_sec}

We empirically demonstrate the inefficiency of LSTD on this type of problem, we consider $d$ tabular MRP $\mathcal{M}_i$ with $\mathcal{S}_i = \{1, \dots, N\}$. We then create a sequence of decoupled MRPs $(\tilde{\mathcal{M}}_k)_k$ of increasing dimension $k = 1, \dots, d$ by having $\tilde{S}^{(t)}_k = (S_1^{(t)}, \dots, S_{k}^{(t)})$ and $\tilde{R}_k^{(t)} \sim \mathcal{N}\left( \sum_{i=1}^{k} r_i\left(S_i^{(t)}\right), 1 \right)$ where $S_i^{(t)}$ is the state at time $t$ in $\mathcal{M}_i$ and $r_i(s)$ is the expected reward when at state $s$ in $\mathcal{M}_i$. 

We consider the ratio between the Mean Squared Error (MSE) $\EE{\|\hat{V}(s) - V(s)\|^2}$ under model-free and model-based estimates. Figure \ref{fig:par_exp} shows the evolution of this ratio as the number of components $k$ grows. Those experiments show that the gap in number of samples required by model-based and LSTD grows rapidly with the number of components. Subfigure \ref{fig:offline_par} displays the ratio when the expectation is taken uniformly over $\mathcal{S}$ to capture off-policy performance and Subfigure \ref{fig:online_par} when the expectation is taken uniformly over the dataset to capture on-policy performance.

\begin{figure}[h!]
\centering
\begin{subfigure}[b]{0.48\columnwidth} 
\begin{tikzpicture}
\begin{axis}[
            title={},
            xmin=0,xmax=210,
            ymin=0,ymax=30,
            width=5.5cm,
            height=5.5cm,
            table/col sep=comma,
            xlabel = Dimension $d$,
            grid=both,
            ylabel = Ratio of MSE $\frac{\EE{\|\hat{V}_{\rm LSTD}(s) - V(s) \|^2}}{\EE{\|\hat{V}_{\rm MB}(s) -V(s)\|^2}}$,
            legend pos = north west]

\addplot [black,mark=*,only marks, mark options={scale=0.5}, error bars/.cd, y explicit,y dir=both] table[x=d,y=Ratio_empirical, y error plus expr=\thisrow{CI_ratios_UB} - \thisrow{Ratio_empirical} , y error minus expr=\thisrow{Ratio_empirical} - \thisrow{CI_ratios_LB}] {Data/exp_par_offline.csv};
\addlegendentry{Empirical ratio};
\end{axis}
\end{tikzpicture}
\caption{$s$ is chosen uniformly at random over $\mathcal{S}$\label{fig:offline_par}}
\end{subfigure}
\begin{subfigure}[b]{0.48\columnwidth} 
\begin{tikzpicture}
\begin{axis}[
            title={},
            xmin=0,xmax=210,
            ymin=0,ymax=15,
            width=5.5cm,
            height=5.5cm,
            table/col sep=comma,
            xlabel = Dimension $d$,
            grid=both,
            legend pos = north west]

\addplot [black,mark=*,only marks, mark options={scale=0.5}, error bars/.cd, y explicit,y dir=both] table[x=d,y=Ratio_empirical, y error plus expr=\thisrow{CI_ratios_UB} - \thisrow{Ratio_empirical} , y error minus expr=\thisrow{Ratio_empirical} - \thisrow{CI_ratios_LB}] {Data/exp_par_online.csv};
\addlegendentry{Empirical ratio};
\end{axis}
\end{tikzpicture}
\caption{$s$ is chosen uniformly at random in the dataset\label{fig:online_par}}
\end{subfigure}
\caption{Ratio of the mean-squared error estimation of LSTD and model based for a randomly generated decoupled MRP, using $\gamma = 0.9, N =5$. Here each sample $\hat{\beta}$ was obtained using a trajectory of length $n = 1000$ and $80$ such samples were averaged to obtain an estimation of the MSE.}
\label{fig:par_exp}
\end{figure}
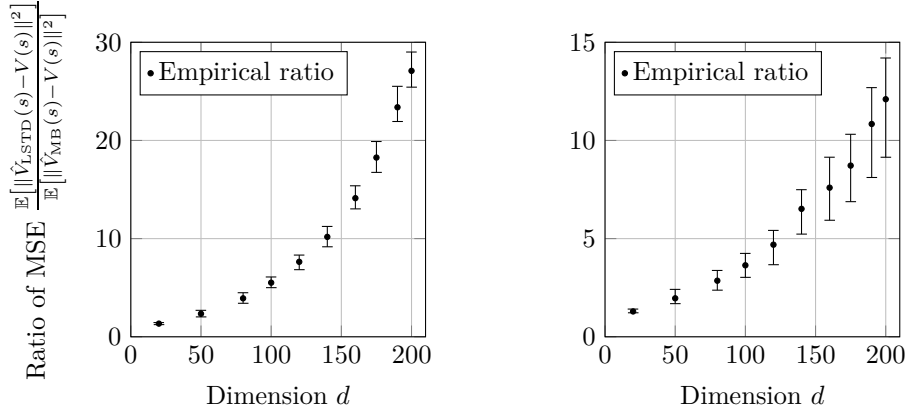

It is reasonable to think that the impossibility to represent the decoupled structure in the value space, as described in Theorem \ref{thm:many_sep}, is the root of this inefficiency. In the next section, we formalize this insight by considering a setting where LSTD can be viewed as doing model-based estimation on an larger than necessary class of models. 
                                                                                                                
\section{Linear dynamical systems}
\label{sec:LDS}

A linear dynamical system is one where there exists a stable matrix $A$ such that the transition kernel is of the form $X^{(t+1)} = A X^{(t)} + \epsilon^{(t)}$ and rewards are of the form $R^{(t)} = f_\theta(X^{(t)}) + \eta^{(t)}$, with $\epsilon^{(t)} \sim \mathcal{N}(0, \sigma^2 I_d)$ and $\eta^{(t)} \sim \mathcal{N}(0,\sigma^2)$. A model is characterized by its parameters $(A, \theta)$.

In what follows, we investigate three such systems which are variations of one another:
\begin{enumerate}
    \item General linear dynamics, linear rewards: In this setting, the dynamic matrix $A$ can be any stable matrix and the reward is linear $f_\theta(x) = \theta^\top x$. We call this class of models $\mathbb{M}_G = \{(A, \theta | A \in \mathbb{R}^{d \times d}, \theta \in \mathbb{R}^d \}$.
    \item Diagonal linear dynamics: The dynamic matrix $A$ is now constrained to be diagonal and the reward is again linear $f_\theta(x) = \theta^\top x$. We call this class of models $\mathbb{M}_D = \{(\Diag{a}, \theta | a \in \mathbb{R}^{d}, \theta \in \mathbb{R}^d \}$.
    \item Linear Quadratic Control (LQR): The dynamic matrix $A$ can be any stable matrix and the reward is quadratic: $f_Q(x) = x^\top Q x$. We call this class of models $\mathbb{M}_Q = \{(A, Q | A \in \mathbb{R}^{d \times d}, Q \in \mathbb{R}^{d \times d} \}$. Note that contrarily to the classic LQR, $Q$ isn't restricted to be SDP. While this hypothesis is important for control, it is not for the policy evaluation problem where only the symmetric structure is typically leveraged. While we don't assume $Q$ to be symmetric, our results easily extend to this case.
\end{enumerate}

As observed in Figure \ref{fig:teaser}, these subtle differences in the class of model candidates can lead to large variations in the performance of LSTD. In what follows, we show that in the first case (general linear dynamics and linear rewards), LSTD is equivalent to model based. However, in the two other cases, we show there is a significant loss of information in the value function space and LSTD can be viewed as a model based procedure in a larger set of models. 

The model-based estimators used in what follows are the plug-in estimator where the dynamics and reward functions are the least-square estimate. \cite{tu2018least} provides a description of LSTD both in the general case and specified to the LQR setting. The formal definition of all the estimators mentioned thereafter can be found in Appendix \ref{sec:DLS_proofs} (linear rewards) and \ref{sec:LSTD_eq_LQR} (quadratic rewards).

\subsection{General linear dynamics, linear rewards}
The class of value functions associated with models in $\mathbb{M}_G$ is the set of all linear functions: for a model $\mathcal{M} = (A, \theta)$, the associated value function is $V_\mathcal{M}(x) = \beta^\top x$ where $\beta = \left(I_d - \gamma A^\top\right)^{-1}\theta $. When $A$ spans the set of all stable matrices and $\theta$ the set of all vectors, then $\beta$ spans the set of all vectors. 

Hence, \begin{equation*}
    \{V_\mathcal{M}; \mathcal{M} \in \mathbb{M}_G \} = \{x \mapsto \beta^\top x; \beta \in \mathbb{R}^d \}
\end{equation*}

In this setting, running LSTD over this class of value functions does not suffer any loss of statistical efficiency compared to model-based as we show these two estimators are the same.

\begin{theorem}
    \label{thm:equivalence_LS}
    In the case of linear systems with linear costs, LSTD is equivalent to the model based estimator with no assumptions on the dynamics, that is $\hat{\beta}_{\rm LSTD} = \hat{\beta}_{\rm Model Based}$
\end{theorem}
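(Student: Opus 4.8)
The plan is to reduce both estimators to closed-form expressions in the empirical second moments of the data and then verify the two expressions coincide by a short matrix manipulation. Introduce the (unnormalized) sample moments
\begin{equation*}
\hat{\Sigma} = \sum_{t=1}^n X^{(t)}(X^{(t)})^\top, \qquad \hat{C} = \sum_{t=1}^n X^{(t)}\big(X^{(t)}_{\rm next}\big)^\top, \qquad \hat{b} = \sum_{t=1}^n X^{(t)} R^{(t)},
\end{equation*}
where $X^{(t)}_{\rm next}$ is the observed successor state. First I would characterize $\hat{\beta}_{\rm LSTD}$ as the projected Bellman fixed point. For a linear value function $V_\beta(x)=\beta^\top x$, the temporal-difference residual at sample $t$ is $R^{(t)} + \gamma\,\beta^\top X^{(t)}_{\rm next} - \beta^\top X^{(t)}$, and LSTD requires this residual to be orthogonal to the features $\phi(X^{(t)})=X^{(t)}$. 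Summing over $t$ gives $\hat{b} + \gamma\hat{C}\beta - \hat{\Sigma}\beta = 0$, hence $\hat{\beta}_{\rm LSTD} = (\hat{\Sigma}-\gamma\hat{C})^{-1}\hat{b}$.

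Next I would write the model-based estimator in the same moments. It is the plug-in $\hat{\beta}_{\rm MB} = (I_d - \gamma\hat{A}^\top)^{-1}\hat{\theta}$, where $\hat{A}$ and $\hat{\theta}$ are the ordinary least-squares fits to the dynamics $X^{(t)}_{\rm next}=A X^{(t)}+\epsilon^{(t)}$ and the reward $R^{(t)}=\theta^\top X^{(t)}+\eta^{(t)}$, respectively. Because both regressions share the same design matrix with Gram matrix $\hat{\Sigma}$, their normal equations yield $\hat{A} = \hat{C}^\top\hat{\Sigma}^{-1}$ (so that $\hat{A}^\top = \hat{\Sigma}^{-1}\hat{C}$) and $\hat{\theta} = \hat{\Sigma}^{-1}\hat{b}$. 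Substituting gives $\hat{\beta}_{\rm MB} = (I_d - \gamma\hat{\Sigma}^{-1}\hat{C})^{-1}\hat{\Sigma}^{-1}\hat{b}$.

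The final step is the algebraic collapse. Factoring $I_d - \gamma\hat{\Sigma}^{-1}\hat{C} = \hat{\Sigma}^{-1}(\hat{\Sigma}-\gamma\hat{C})$ and inverting,
\begin{equation*}
\hat{\beta}_{\rm MB} = \big[\hat{\Sigma}^{-1}(\hat{\Sigma}-\gamma\hat{C})\big]^{-1}\hat{\Sigma}^{-1}\hat{b} = (\hat{\Sigma}-\gamma\hat{C})^{-1}\hat{\Sigma}\,\hat{\Sigma}^{-1}\hat{b} = (\hat{\Sigma}-\gamma\hat{C})^{-1}\hat{b} = \hat{\beta}_{\rm LSTD},
\end{equation*}
which is the claim. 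I do not expect a deep obstacle here; the work is almost entirely bookkeeping. The two points demanding care are the transpose conventions, namely pinning down the orientation of $\hat{C}$ so that the dynamics regression produces $\hat{A}^\top=\hat{\Sigma}^{-1}\hat{C}$ rather than its transpose, and the invertibility assumptions, namely that $\hat{\Sigma}$ and $\hat{\Sigma}-\gamma\hat{C}$ are nonsingular (generically true for $n\ge d$ with a stable, well-excited system and $0<\gamma\le 1$), which is exactly what guarantees both estimators are well defined. The essential conceptual content is that the shared design matrix makes the $\hat{\Sigma}^{-1}$ factors from the two separate regressions cancel against the $(I_d-\gamma\hat{A}^\top)^{-1}$ of the plug-in, leaving precisely the LSTD system.
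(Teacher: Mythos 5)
Your proof is correct and takes essentially the same route as the paper: both write the two estimators in closed form over the sample moments (the paper uses $XX^\top$, $X(X')^\top$, $XR$ where you use $\hat{\Sigma}$, $\hat{C}$, $\hat{b}$), observe that $\hat{A}^\top = \hat{\Sigma}^{-1}\hat{C}$ and $\hat{\theta} = \hat{\Sigma}^{-1}\hat{b}$ share the same Gram matrix, and perform the identical cancellation $\left(I_d - \gamma\hat{\Sigma}^{-1}\hat{C}\right)^{-1}\hat{\Sigma}^{-1}\hat{b} = \left(\hat{\Sigma}-\gamma\hat{C}\right)^{-1}\hat{b}$. The only cosmetic difference is that you re-derive the LSTD closed form from the orthogonality of TD residuals to the features, whereas the paper takes that closed form as the definition.
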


This result can be observed by simply considering the closed form expression of both estimators. The detailed proof is defered to the Appendix \ref{subsec:proof_equivalence_LS}.

\subsection{Diagonal linear dynamics, linear rewards}\label{subsec:diagonal}

Restricting the dynamics to be diagonal drastically reduces the number of candidate models. Intuitively, leveraging this added structure should be necessary to reach statistical efficiency. However, the class of value functions doesn't shrink compared to the case where no restriction is imposed on the dynamics: 
\begin{equation*}
    \{V_\mathcal{M}; \mathcal{M} \in \mathbb{M}_D \} = \{x \mapsto \beta^\top x; \beta \in \mathbb{R}^d \} = \{V_\mathcal{M}; \mathcal{M} \in \mathbb{M}_G \}
\end{equation*}

Therefore, there is information loss when moving to the space of value functions. Since LSTD only leverages the knowledge of the class of value functions, it will output the same estimate whether it is known that the dynamics are diagonal or not. 
That is, in this setting, LSTD is equivalent to a model-based approach in the wrong class of models. It is, implicitly, learning $\Theta(d^2)$ free-parameters instead of $\Theta(d)$. The following theorem shows that it indeed leads in a loss of statistical efficiency. We also verify empirically this result, as shown in Figure \ref{fig:lin_ex}.

\begin{theorem}
\label{thm:DLS_gap}
    Let's consider the linear system with diagonal transitions $ A =\lambda I_d$ and linear rewards $\theta =\boldsymbol{1}_d $. Then,
    \begin{equation*}
       \lim_{n \to \infty} \dfrac{\EE{\|\hat{\beta}_{\rm LSTD} - \beta\|^2}}{\EE{\|\hat{\beta}_{\rm diag} - \beta\|^2}} = \Theta(d)
    \end{equation*}
\end{theorem}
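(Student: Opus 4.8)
The plan is to convert both estimators to explicit plug-in form, propagate their sampling error through the map $(A,\theta)\mapsto\beta$ by a first-order (delta-method) expansion, and read off the $d$-dependence of the two asymptotic variances.

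\textbf{Step 1 (reduce to plug-in estimators).} By Theorem \ref{thm:equivalence_LS}, $\hat\beta_{\rm LSTD}=(I_d-\gamma\hat A^\top)^{-1}\hat\theta$, where $\hat A$ is the \emph{unconstrained} ordinary least squares (OLS) estimate of the $d\times d$ dynamics obtained by regressing $X^{(t)}_{\rm next}$ on $X^{(t)}$, and $\hat\theta$ is the OLS estimate of the reward weights obtained by regressing $R^{(t)}$ on $X^{(t)}$. The diagonal estimator is $\hat\beta_{\rm diag}=(I_d-\gamma\,\Diag{\hat a})^{-1}\hat\theta$, where $\hat a_i$ is the \emph{scalar} OLS estimate regressing $X^{(t)}_{i,\rm next}$ on $X^{(t)}_i$ alone. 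The crucial point is that the two estimators differ only in how the dynamics are estimated (a full $d\times d$ matrix versus $d$ diagonal entries); the reward term $\hat\theta$ is identical.

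\textbf{Step 2 (asymptotics of the OLS pieces).} Since $A=\lambda I_d$ with $|\lambda|<1$ and $\epsilon^{(t)}\sim\mathcal N(0,\sigma^2 I_d)$, the stationary covariance of the state is $\Sigma=\tfrac{\sigma^2}{1-\lambda^2}I_d$, which is a scalar multiple of the identity. Standard OLS/stationary-process asymptotics then give that the entries of $\sqrt n\,(\hat A-A)$ are asymptotically i.i.d. $\mathcal N(0,\,1-\lambda^2)$ (the scalar variance follows from $\sigma^2\Sigma^{-1}_{ii}=1-\lambda^2$, and the independence across entries from $\Sigma^{-1}$ being diagonal together with the independence of the coordinates of $\epsilon^{(t)}$); similarly $\sqrt n\,(\hat a_i-\lambda)$ are i.i.d. $\mathcal N(0,\,1-\lambda^2)$, and $\sqrt n\,(\hat\theta-\boldsymbol 1_d)\Rightarrow\mathcal N(0,(1-\lambda^2)I_d)$. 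Because the reward noise $\eta^{(t)}$ is independent of the transition noise $\epsilon^{(t)}$, the limiting law of $\hat\theta$ is independent of that of $\hat A$ and of $\hat a$, so the dynamics and reward contributions will not produce cross terms.

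\textbf{Step 3 (delta method and the $d$ versus $d^2$ count).} Expanding $f(A,\theta)=(I_d-\gamma A^\top)^{-1}\theta$ to first order at $(\lambda I_d,\boldsymbol 1_d)$, where $\beta=\tfrac{1}{1-\gamma\lambda}\boldsymbol 1_d$, yields
\[
df=\frac{1}{1-\gamma\lambda}\,d\theta+\frac{\gamma}{(1-\gamma\lambda)^2}\,(dA)^\top\boldsymbol 1_d ,
\]
so the $i$-th coordinate of the dynamics contribution is $\tfrac{\gamma}{(1-\gamma\lambda)^2}\sum_j (dA)_{ji}$. For $\hat\beta_{\rm LSTD}$ this sum runs over all $d$ independent entries of column $i$ of $\hat A-A$, so each of the $d$ coordinates of the error has variance $\Theta(d/n)$ and the dynamics contribution to the total MSE is $\Theta(d^2/n)$. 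For $\hat\beta_{\rm diag}$, $dA$ is diagonal, so only the single entry $(\hat a_i-\lambda)$ survives and the dynamics contribution is $\Theta(d/n)$. The reward contribution $\tfrac{1}{1-\gamma\lambda}(\hat\theta-\boldsymbol 1_d)$ is identical and $\Theta(d/n)$ in both. Hence $\EE{\|\hat\beta_{\rm LSTD}-\beta\|^2}=\Theta(d^2/n)$ and $\EE{\|\hat\beta_{\rm diag}-\beta\|^2}=\Theta(d/n)$, and taking $n\to\infty$ the ratio tends to a quantity whose leading behavior is the quotient of the two variance expressions, which I will verify is bounded between constant multiples of $d$, i.e. $\Theta(d)$ with constants depending only on $\gamma$ and $\lambda$.

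\textbf{Main obstacle.} The delta-method heuristic pins down the correct leading-order variances, but upgrading it to a statement about $\lim_{n\to\infty}\EE{\|\hat\beta-\beta\|^2}$ requires controlling the matrix inverse $(I_d-\gamma\hat A^\top)^{-1}$ (equivalently $(\hat\Sigma-\gamma\hat C^\top)^{-1}$ in the closed form), which can be ill-conditioned on a low-probability event where $\hat A$ strays near instability. The technical core of the proof is therefore to justify convergence of the \emph{second moment}, not just of the distribution: I would condition on a high-probability event on which $\hat A$ is within a fixed neighborhood of $\lambda I_d$, bound the linearization remainder there, and separately show the complementary event contributes negligibly to the expectation (a uniform-integrability argument), all while invoking the appropriate law of large numbers and central limit theorem for the empirical second moments under the MRP's data-generating process.
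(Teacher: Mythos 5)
Your proposal follows essentially the same route as the paper: reduce LSTD to the unconstrained plug-in estimator via Theorem \ref{thm:equivalence_LS}, apply the OLS central limit theorem for the dynamics and reward estimates (the paper's Lemma \ref{lem:dynamicsCLT}), propagate the error through $(A,\theta)\mapsto(I_d-\gamma A^\top)^{-1}\theta$ by the delta method, and compare the resulting $\Theta(d^2)$ and $\Theta(d)$ asymptotic mean-squared errors exactly as in the paper's Corollaries \ref{cor:MB_estim} and \ref{cor:diag_MSE}, whose specialization to $A=\lambda I_d$, $\theta=\boldsymbol{1}_d$ your computations reproduce correctly. The only substantive difference is that your ``main obstacle'' paragraph explicitly flags the upgrade from convergence in distribution to convergence of second moments (a uniform-integrability issue caused by the possible ill-conditioning of $(I_d-\gamma\hat A^\top)^{-1}$), a step the paper's corollaries assert without comment, so on this point you are if anything more careful than the paper.
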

\begin{proof}
    Using the fact that $\hat{\beta}_{\rm LSTD} = \hat{\beta}_{\rm MB}$ and Corollary \ref{cor:MB_estim}, we have
    \begin{align*}
        \lim_{n \to \infty} n\EE{\|\hat{\beta}_{\rm LSTD} - \beta\|^2} &= \left(\frac{d\gamma^2}{(1-\gamma\lambda)^2} + 1\right) \left(d \frac{1-\lambda^2}{(1-\lambda \gamma)^2}\right) 
        = \Theta(d^2)
    \end{align*}
    Similarly, from \ref{cor:diag_MSE}, we have $\lim_{n \to \infty} n\EE{\|\hat{\beta}_{\rm diag} - \beta\|^2} = d \left(\frac{\gamma^2}{(1-\gamma\lambda)^2} +1 \right) \frac{1 - \lambda^2}{(1-\gamma \lambda)^2} = \Theta(d)$. Finally, taking the ratio gives  the result. 
    
\end{proof}

\begin{figure}[h!]
\centering
\begin{center}
\centering
\centering
\begin{tikzpicture}
\begin{axis}[
            title={},
            xmin=0,xmax=50,
            ymin=0,ymax=50,
            width=5.5cm,
            height=5.5cm,
            table/col sep=comma,
            xlabel = Dimension $d$,
            ylabel = Ratio in MSE $\frac{\EE{\|\hat{\beta}_{\rm LSTD} -\beta \|^2}}{\EE{\|\hat{\beta}_{\rm diag} -\beta\|^2}}$,
            grid=both,
            legend pos = north west]

\addplot [red,mark=*,only marks, mark options={scale=0.5}, error bars/.cd, y explicit,y dir=both] table[x=d,y=Ratio_empirical, y error plus expr=\thisrow{CI_ratios_UB} - \thisrow{Ratio_empirical} , y error minus expr=\thisrow{Ratio_empirical} - \thisrow{CI_ratios_LB}] {Data/linear_diagonal.csv};
\addlegendentry{Empirical};
\addplot [black] table[x=d,y=Ratio_theoretical] {Data/linear_diagonal.csv};
\addlegendentry{Asymptotic};

\end{axis}
\end{tikzpicture}

\end{center}
\caption{Ratio of the mean-squared error estimation of LSTD and diagonal model based as a functoin of the dimension $d$. The system considered is the one with dynamics $A = 0.9 I_d$ and rewards $\theta = \boldsymbol{1}_d$, using $\gamma = 0.9$. Here $\lambda = \gamma = 0.9$. $100$ samples of each $\hat{\beta}$ were obtained using $n = 1000$ transitions.}
\label{fig:lin_ex}
\end{figure}
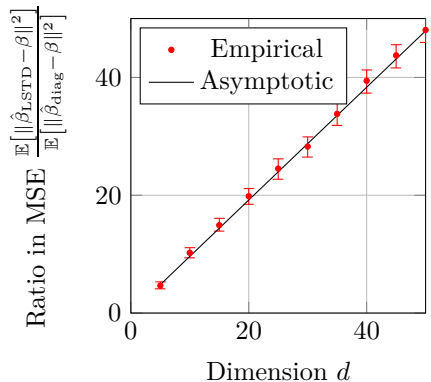

\subsection{Linear Quadratic Control}
\label{sec:LQR}
As already observed by \cite{tu2019gap}, LSTD fails to be statistically efficient in the LQR setting. We show that this setting also suffer a loss of information and LSTD is again equivalent to a model-based algorithm that optimizes over a larger class of models that induces this set of value function. The value function of a system with linear dynamics $A$ and quadratic rewards $Q$ is 
\begin{align*}
V_{A,Q}(x) = x^\top P x + \frac{\gamma}{1-\gamma} \tr{P} = \tr{(xx^\top + \frac{\gamma}{1-\gamma}I_d)P}
\end{align*}
where $P$ verifies the discrete Lyapunov equation $\left(\gamma^{1/2}A\right)^\top P \left(\gamma^{1/2}A\right) - P + Q = 0$. Once more, we show this the exactly the right class of value functions.

\begin{proposition}
    \label{prop:LQR_V_exact}
    For any $P \in \mathbb{R}^{d \times d}$, there exists a stable matrix $A$ and a reward matrix $Q$ such that 
    \begin{equation*}
        V_{A,Q}(x) = \tr{(xx^\top + \frac{\gamma}{1-\gamma}I_d)P}
    \end{equation*}
    That is
    \begin{equation}\label{eq:lqr_value_functions}
        \{V_\mathcal{M} | \mathcal{M} \in \mathbb{M}_Q\} =\left\{ x \mapsto \tr{(xx^\top + \frac{\gamma}{1-\gamma}I_d)P} | P \in \mathbb{R}^{d \times d} \right\}
    \end{equation}
\end{proposition}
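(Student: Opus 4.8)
The plan is to prove the set equality \eqref{eq:lqr_value_functions} by establishing the two inclusions, observing that one is immediate while the other carries all the content. For the inclusion $\{V_\mathcal{M} \mid \mathcal{M} \in \mathbb{M}_Q\} \subseteq \{ x \mapsto \tr{(xx^\top + \frac{\gamma}{1-\gamma}I_d)P} \mid P \in \mathbb{R}^{d \times d}\}$, I would simply invoke the closed-form value function recalled just above the proposition: every model $(A,Q) \in \mathbb{M}_Q$ with $A$ stable has $V_{A,Q}(x) = \tr{(xx^\top + \frac{\gamma}{1-\gamma}I_d)P}$, where $P$ is the solution of the discrete Lyapunov equation $(\gamma^{1/2}A)^\top P (\gamma^{1/2}A) - P + Q = 0$. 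Since this $P$ is some matrix in $\mathbb{R}^{d\times d}$, the value function lies in the right-hand set.

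The reverse inclusion is the heart of the statement, and amounts to showing that the map sending a model to its value-function matrix $P$ is surjective onto all of $\mathbb{R}^{d\times d}$. My approach is to read the Lyapunov equation as an equation that pins down $Q$ rather than $P$. Rewriting $(\gamma^{1/2}A)^\top P (\gamma^{1/2}A) - P + Q = 0$ as $Q = P - \gamma A^\top P A$, I see that for any target matrix $P \in \mathbb{R}^{d\times d}$ and any fixed stable matrix $A$, setting $Q := P - \gamma A^\top P A$ produces a model $(A,Q) \in \mathbb{M}_Q$ for which $P$ satisfies the associated Lyapunov equation. The cleanest concrete choice is $A = 0$, which is trivially stable and yields $Q = P$; one could equally take $A = c I_d$ with $|c| < 1$, giving $Q = (1 - \gamma c^2) P$.

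The one point that warrants a word of care — and the only place a genuine argument is needed — is that the matrix $P$ I plug in really is the value-function matrix of the model I construct, not merely \emph{a} solution of the equation. This follows from uniqueness: when $A$ is stable, $\gamma^{1/2}A$ has spectral radius below $1$, so the linear operator $P \mapsto P - \gamma A^\top P A$ is invertible and the Lyapunov equation has a unique solution for each $Q$. Since my chosen $P$ satisfies the equation with $Q = P - \gamma A^\top P A$ by construction, it is that unique solution, whence $V_{A,Q}(x) = \tr{(xx^\top + \frac{\gamma}{1-\gamma}I_d)P}$ exactly. I expect no real obstacle beyond this invertibility bookkeeping; the substance of the result is just the observation that the freedom in choosing $Q$ (for any single fixed stable $A$) already lets the value-function matrix $P$ range over the entire space $\mathbb{R}^{d\times d}$.
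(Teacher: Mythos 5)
Your proof is correct and takes essentially the same route as the paper: fix one simple stable matrix $A$, read the Lyapunov equation backwards to solve for $Q$, and use uniqueness of the Lyapunov solution (the paper leaves this last point implicit, taking $A = 0.5 I_d$). In fact your constant is the right one --- with $A = cI_d$ your recipe gives $Q = (1-\gamma c^2)P$, so $Q = \frac{4-\gamma}{4}P$ at $c = 1/2$ --- whereas the paper's stated choice $Q = \frac{4}{4-\gamma}P$ is the reciprocal scaling and appears to be a typo.
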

\begin{proof}
    For a matrix $P$, choose $A = .5I_d$ and $Q = \frac{4}{4-\gamma}P$. 
\end{proof}
We now operate a change of variable, $Z^{(t)} = \vect{X^{(t)}(X^{(t)})^\top}$ where $\vect{M}$ is the vector obtained by stacking the columns of $M$. Since the mapping $x \mapsto xx^\top$ is injective, there is no loss of observability, that is observing $X^{(t)}$ or $Z^{(t)}$ is equivalent. 

In the space of $Z^{(t)}$, the system and rewards are affine:
\begin{align*}
    \EE{Z^{(t+1)} | Z^{(t)}} &= M Z^{(t)} + \vect{I_d}; \quad \quad \text{where }  M = A \otimes A
    \\\EE{R^{(t)} | Z^{(t)}} &=  \vect{Q}^\top Z^{(t)}
\end{align*}
where $\otimes$ is the Kronecker product. 

Viewing the problem as one with state variable $Z^{(t)}$  lets us draw a clarifying parallel to the setting where the rewards are linear and the dynamics are known to be diagonal (Subsection \ref{subsec:diagonal}): in both settings, the class of possible transitions is a subset of all possible linear dynamics. Instead of the diagonal structure, in this example we know the dynamics matrix $M$ is of the form $A \otimes A$ instead of being any arbitrary matrix in $\mathbb{R}^{d^2 \times d^2}$. Working solely with value function representations loses this structure on the dynamics matrix $M$; this is what we call \emph{information-loss}.

Given this discussion, it becomes unsurprising that LSTD is inefficient even when applied with the right class of value functions. In fact, similar to the setting with diagonal dynamics, LSTD is equivalent to a model-based method that estimates the dynamics matrix $M$ without incorporating known constraints. Information about those structural constraints has been ``lost'' when working with a value-basted representation. 
\begin{theorem}[Informal]
    LSTD applied with value functions in \eqref{eq:lqr_value_functions} is equivalent to a model-based plug-in estimator where the matrix $M\in \mathbb{R}^{d^2 \times d^2}$ is estimated by unconstrained least-squares. 
\end{theorem}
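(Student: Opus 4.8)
The plan is to reduce the claim to the already-established unstructured-linear equivalence (Theorem~\ref{thm:equivalence_LS}) by passing to the lifted coordinates $Z^{(t)} = \vect{X^{(t)}(X^{(t)})^\top}$. By the trace--vec identity, every candidate value function in \eqref{eq:lqr_value_functions} can be written as $V(x) = \psi(x)^\top \vect{P}$ with the fixed feature map $\psi(x) = \vect{xx^\top + \tfrac{\gamma}{1-\gamma}I_d} = Z + \tfrac{\gamma}{1-\gamma}\vect{I_d}$, so in $Z$-coordinates the quadratic value-function class is an affine class $z \mapsto w^\top z + w_0$. Since $x \mapsto xx^\top$ is injective (cf.\ the discussion around Proposition~\ref{prop:LQR_V_exact}), running LSTD on the $X$-data with the quadratic class is identical, term by term in its normal equations, to running LSTD on the $Z$-data with this affine class: LSTD sees the data only through the empirical feature moments $\tfrac1n\sum_t \psi_t \psi_t^\top$, $\tfrac1n\sum_t \psi_t(\psi_t-\gamma\psi_t')^\top$, and $\tfrac1n\sum_t \psi_t R_t$ (with $\psi_t=\psi(S^{(t)})$ and $\psi_t'=\psi(S^{(t)}_{\rm next})$), all of which are preserved under the invertible reparametrization.

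First I would identify the lifted system as an instance of the ``general linear dynamics, linear rewards'' template of Section~\ref{sec:LDS}: the dynamics $\EE{Z^{(t+1)}\mid Z^{(t)}} = M Z^{(t)} + \vect{I_d}$ are affine, the rewards $\EE{R^{(t)}\mid Z^{(t)}} = \vect{Q}^\top Z^{(t)}$ are linear, and the induced value functions are exactly the affine functionals of the lifted state. The model-based plug-in estimator in this template forms $\hat M$ and the constant $\hat b$ by the unconstrained least-squares regression of $Z^{(t+1)}$ on $(Z^{(t)},1)$, forms $\hat c = \vect{\hat Q}$ by regressing $R^{(t)}$ on $Z^{(t)}$, and then returns the value function whose quadratic coefficient $\hat w = \vect{\hat P}$ solves the linear fixed-point equation $(I-\gamma \hat M^\top)\hat w = \hat c$ --- the exact analogue of the Lyapunov equation $P = Q + \gamma A^\top P A$ but with the structured operator $A\otimes A$ replaced by the free matrix $\hat M$ --- while the additive offset is reconstructed from $\hat b$ through the geometric factor $\gamma/(1-\gamma)$. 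Crucially, this estimator never imposes the Kronecker structure $M = A\otimes A$; it is precisely the ``$M$ estimated by unconstrained least squares'' object in the statement, carrying $\Theta(d^4)$ free parameters rather than the $\Theta(d^2)$ of the correct model class $\mathbb{M}_Q$.

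With the setting so identified, I would invoke Theorem~\ref{thm:equivalence_LS} verbatim in the lifted coordinates: for general (unstructured) linear dynamics and linear rewards, the LSTD estimate and the least-squares plug-in estimate coincide as closed-form expressions in the empirical moment matrices. Combined with the reparametrization invariance of the first paragraph, this common estimate is exactly the output of LSTD applied with the value functions of \eqref{eq:lqr_value_functions}; translating back through $\hat w = \vect{\hat P}$ gives $\hat P_{\rm LSTD} = \hat P_{\rm MB}$, the asserted equivalence. This mirrors the diagonal-dynamics argument of Subsection~\ref{subsec:diagonal}, with the Kronecker constraint $M = A\otimes A$ now playing the role that the diagonal constraint played there.

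The main obstacle I anticipate is bookkeeping around the affine constant rather than anything conceptual. The lifted dynamics are inhomogeneous and the value functions carry the additive offset $\tfrac{\gamma}{1-\gamma}\tr{P}$, so I must check that the treatment of the constant feature (equivalently, the augmentation to $(Z,1)$) is handled identically by LSTD and by the plug-in estimator, and that letting $\hat b$ be estimated freely is what the statement intends by ``unconstrained least squares.'' A second, more mechanical point is that the lifted noise is heteroskedastic --- its conditional covariance depends on $Z$, unlike the homoskedastic Gaussian noise of the original $X$-system --- but since both estimators in Theorem~\ref{thm:equivalence_LS} are functions only of the empirical first- and second-moment matrices of the features and never of the noise model, this does not affect the algebraic identity and only becomes relevant later, when the associated variances (and hence the information loss) are computed.
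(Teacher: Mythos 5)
Your high-level route is the paper's route: lift to $Z^{(t)} = \vect{X^{(t)}(X^{(t)})^\top}$, note the lifted dynamics are affine with matrix $M = A\otimes A$ and the rewards linear in $Z$, and replay the matrix algebra behind Theorem \ref{thm:equivalence_LS}. However, there is a genuine gap, and it sits exactly where you flagged ``bookkeeping'': the handling of the affine constant. First, the class \eqref{eq:lqr_value_functions} is \emph{not} the free affine class $z \mapsto w^\top z + w_0$; the offset is tied to the linear part, since $V(x) = \vect{P}^\top z + \tfrac{\gamma}{1-\gamma}\vect{I_d}^\top\vect{P}$, i.e. $V(z) = w^\top\left(z + \tfrac{\gamma}{1-\gamma}\vect{I_d}\right)$ with $w=\vect{P}$. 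So LSTD's feature map is the shifted map $\psi(z) = z + \tfrac{\gamma}{1-\gamma}\vect{I_d}$, not the augmented pair $(z,1)$. Second, and consequently, the model-based estimator that LSTD exactly equals is not the one you construct. You regress $Z^{(t+1)}$ on $(Z^{(t)},1)$ with a free intercept $\hat b$ and regress $R^{(t)}$ on raw $Z^{(t)}$; the paper's $\hat M$ and $\hat\theta$ are \emph{no-intercept} least squares in the shifted coordinates, $\hat M = \arg\min_M \sum_t \bigl\| \bigl(Z^{(t+1)}+\tfrac{\gamma}{1-\gamma}\vect{I_d}\bigr) - M\bigl(Z^{(t)}+\tfrac{\gamma}{1-\gamma}\vect{I_d}\bigr)\bigr\|^2$, and similarly for $\hat\theta$. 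An intercept regression is centered least squares; the shifted no-intercept regression is uncentered with a fixed deterministic shift. These disagree in finite samples (already for $d=1$ the two dynamics coefficients and the two reward coefficients differ), so the identity $\hat P_{\rm LSTD} = (I_{d^2}-\gamma \hat M^\top)^{-1}\hat\theta$ --- which is the entire content of the theorem --- holds for the paper's pairing and not, as an exact algebraic statement, for yours.

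Relatedly, you cannot invoke Theorem \ref{thm:equivalence_LS} ``verbatim'': it concerns homogeneous linear dynamics and no-intercept least squares, whereas the lifted system is affine. The paper does not invoke it either; it redoes the short factorization directly with the shifted feature matrices $\Phi, \Phi'$ (columns $Z^{(t)}+\tfrac{\gamma}{1-\gamma}\vect{I_d}$ and $Z^{(t+1)}+\tfrac{\gamma}{1-\gamma}\vect{I_d}$), namely $\hat P_{\rm LSTD} = \left(\Phi(\Phi-\gamma\Phi')^\top\right)^{-1}\Phi R = \left(I_{d^2}-\gamma\hat M^\top\right)^{-1}\hat\theta$, using $\Phi\Phi^\top - \gamma\Phi(\Phi')^\top = \Phi\Phi^\top\left(I_{d^2} - \gamma\left(\Phi\Phi^\top\right)^{-1}\Phi(\Phi')^\top\right)$. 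Note also that the paper's $\hat M$ regression is misspecified for the affine truth (the true map between shifted variables is affine, not linear), yet it is still consistent for $M$ --- the paper remarks on this explicitly, which is why the resulting plug-in deserves the name ``model-based.'' If you replace your free-intercept plug-in with the shifted no-intercept one and carry out this factorization, your argument becomes the paper's proof; as written, the asserted equivalence is with the wrong estimator and is unproven.
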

 A precise description of this model-based procedure can be found in Appendix \ref{sec:LSTD_eq_LQR}.

\section{Conclusion}

In this work, we show that some information about the model may not be representable in the space of value functions. By exploring multiple closely related settings where the efficiency of model-free methods varies significantly, we are able to correlate the (in)efficiency of model-free methods with whether there is loss of information in the space of value functions. We are able to establish this loss of information as the driver of statistical inefficiency in several of our examples by observing that LSTD implicitly operates as model-based methods over an excessively large class.   

Importantly, this work does not imply an impossibility of model-free methods to be efficient but instead hints that any ``generic''
algorithm would only be efficient for a few settings. When problem structure (e.g. decoupled structure) is known or learned from auxiliary tasks, efficient learning requires exploiting this structure. 
Information-loss suggests that value-based estimation procedures cannot adapt to some problem structures based on the value function representation alone, since the value function representation is invariant to that structure. Proving a formal version of this statement is left for future work.

It is worth emphasizing that if one had a specific structure in mind, like the decoupled MRP structure, then it may be possible to design value-based estimation procedures that are tailored to that structure and enhance efficiency. In this case, the modeler is overcoming information loss by  reflecting problem structure in the estimation algorithm and not the value function class. 

\appendix

\bibliography{main}
\bibliographystyle{rlc}

\section{Proof of Theorem \ref{thm:DLS_gap}} \label{sec:DLS_proofs}
Theorem \ref{thm:equivalence_LS} states that LSTD over the class of all linear functions is equivalent to the plug-in estimator in the case where nothing is assumed about the linear system. The proof will work by comparing the limiting variance of the model based estimator when the dynamics are known to be linear versus when nothing is known about the dynamics. We do so by stating a Central Limit Theorem in both cases. 

\subsection{Notations and preliminaries}

In what follows, $A \in \mathbb{R}^d$ is a stable matrix, $\theta \in \mathbb{R}^d$ is a reward vector. $(\epsilon^{(t)})_t$ and $(\eta^{(t)})_t$ are two independent i.i.d. sequences such that $\epsilon^{(t)} \sim \mathcal{N}\left(0, \sigma^2 I\right)$ and $\eta^{(t)} \sim \mathcal{N}(0, \sigma^2)$. The dataset accessed by any algorithm is of the form $(X^{(t)}, R^{(t)}, X^{(t+1)})_{t=1, \dots, n}$ where 
\begin{align*}
    X^{(t+1)} = A X^{(t)} + \epsilon^{(t)}; \quad \quad R^{(t)} = \theta^\top X^{(t)} + \eta^{(t)}
\end{align*}
$\beta$ is the parameter of the value function (that is $V(x) = \beta^\top x$) and verifies $\beta = (I_d - \gamma A^\top)^{-1}\theta$. The stationary covariance matrix $P_\infty$ of the process $(X^{(t)})_t$ exists \citep{mann1943statistical} and verifies 
$$A P_\infty A^\top - P_\infty + \sigma^2 I_d = 0.$$
Finally, $\otimes$ is the Kronecker product of two matrices and $\| \cdot \|$ is the Frobenius norm. We now formally introduce the estimators we are comparing.

\begin{definition}[LSTD]
The LSTD estimator when the systems has linear dynamics and linear rewards is 
\begin{equation*}
    \hat{\beta}_{\rm LSTD} = \left(\sum_{t=1}^n X^{(t)} (X^{(t)} - \gamma X^{(t+1)})^\top  \right)^{-1}\sum_{t=1}^n R^{(t)} X^{(t)}
\end{equation*}
\end{definition}

\begin{definition}[Model Based estimator]
The model based estimator considered in this work are of the form
\begin{equation*}
    \hat{\beta}_{\rm Model Based}(\mathcal{A}) = \left(I - \gamma \hat{A}^\top\right)^{-1}\hat{\theta}
\end{equation*}
    where 
    \begin{align*}
        \hat{A} = \arg\min_{A \in \mathcal{A}} \sum_{t = 1}^n \| X^{(t+1)} - A X^{(t)} \|^2; \quad \quad \hat{\theta} = \arg\min_{\theta \in \mathbb{R}} \sum_{t = 1}^n ( R^{(t)} - \theta^\top X^{(t)} )^2.
    \end{align*}
    When the class of models considered is not specified, it is understood there is no restriction $$\hat{\beta}_{\rm Model Based} = \hat{\beta}_{\rm Model Based}(\mathbb{R}^{d \times d}).$$ 
    In the case of diagonal transitions, we write $$\hat{\beta}_{\rm diag} = \hat{\beta}_{\rm Model Based}(\Diag{\mathbb{R}^{d}}).$$
\end{definition}

\subsection{Proof of Theorem \ref{thm:equivalence_LS}}
\label{subsec:proof_equivalence_LS}
To facilitate the proof, we rewrite the estimators of interest in their closed form in matrix notation. First LSTD is
\begin{equation*}
    \hat{\beta}_{\rm LSTD} = \left(X(X-\gamma X')^\top\right)^{-1}XR
\end{equation*}

where $X \in \mathbb{R}^{d \times n}$ is the matrix where the $t$-th column is $X^{(t)}$, $X' \in \mathbb{R}^{d \times n}$ is the matrix where the $t$-th column is $X^{(t+1)}$ and $R \in \mathbb{R}^n$ the vector where $R_t = R^{(t)}$.

On the other hand, the least-square estimate of the dymaic matrix $A$ and of the reward vector $\theta$ are
\begin{align*}
    \hat{A}^\top = (XX^\top)^{-1} X(X')^\top ; \quad \quad \hat{\theta} = (XX^\top)^{-1} XR
\end{align*}

This allows to re-write the plug-in estimator as
\begin{align*}
    \hat{\beta}_{\rm Model Based}(\mathbb{R}^{d \times d}) &= \left(I_d - \gamma A^\top\right)^{-1} \hat{\theta} 
    = \left(I_d - \gamma(XX^\top)^{-1} X(X')^\top \right)^{-1} (XX^\top)^{-1} XR\\
    &= \left(XX^\top - \gamma X(X')^\top\right)^{-1} XR = \left(X(X - \gamma X')^\top\right)^{-1} XR\\
    &= \hat{\beta}_{\rm LSTD}.
\end{align*}

\subsection{Central Limit Theorem in the general linear dynamic setting}

\begin{proposition}
    The LSTD estimator $\hat{\beta}_{\rm LSTD} \to \beta$ a.s. and 
    \begin{align*}
        \sqrt{n} \left(\left(I_d - \gamma \hat{A}^\top \right)^{-1} - \left(I_d - \gamma A^\top \right)^{-1}\right)\theta \Rightarrow \mathcal{N}\left(0, \sigma^2 \left(\gamma^2\| \left(I_d - \gamma A^\top \right)^{-1} \theta\|^2 +1\right) \left(I_d - \gamma A^\top\right)^{-1} P_\infty^{-1} \left(I_d - \gamma A\right)^{-1}  \right)
    \end{align*}
\end{proposition}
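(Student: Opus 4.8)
The plan is to invoke the equivalence $\hat\beta_{\rm LSTD}=\hat\beta_{\rm Model Based}$ from Theorem~\ref{thm:equivalence_LS}, so that it suffices to analyze the plug-in estimator $(I_d-\gamma\hat A^\top)^{-1}\hat\theta$ with $\hat A^\top=(XX^\top)^{-1}X(X')^\top$ and $\hat\theta=(XX^\top)^{-1}XR$. Substituting the data-generating equations $(X')^\top=X^\top A^\top+E^\top$ and $R=X^\top\theta+\boldsymbol\eta$ (where $E$ stacks the $\epsilon^{(t)}$ and $\boldsymbol\eta$ the $\eta^{(t)}$) collapses the errors to the clean forms $\hat A^\top-A^\top=(XX^\top)^{-1}XE^\top$ and $\hat\theta-\theta=(XX^\top)^{-1}X\boldsymbol\eta$. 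First I would establish the a.s. statement: since $A$ is stable the chain is geometrically ergodic with stationary covariance $P_\infty\succ 0$, so the ergodic theorem gives $\tfrac1n XX^\top\to P_\infty$, while $\tfrac1n XE^\top\to 0$ and $\tfrac1n X\boldsymbol\eta\to 0$ a.s. because $\epsilon^{(t)},\eta^{(t)}$ are mean-zero and independent of $X^{(t)}$. Continuity of $A\mapsto(I_d-\gamma A^\top)^{-1}$ near the stable $A$ then yields $\hat\beta_{\rm LSTD}\to\beta$ a.s.

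For the CLT I read the left-hand side as the full estimator error $\sqrt n(\hat\beta_{\rm LSTD}-\beta)$ --- consistent with the stated variance, whose ``$+1$'' is precisely the reward-noise contribution and must be kept. Writing $M=I_d-\gamma A^\top$ and $\hat M=I_d-\gamma\hat A^\top$,
\begin{equation*}
\hat\beta_{\rm LSTD}-\beta = \hat M^{-1}(\hat\theta-\theta) + (\hat M^{-1}-M^{-1})\theta,
\end{equation*}
and the resolvent identity gives $\hat M^{-1}-M^{-1}=\gamma\hat M^{-1}(\hat A^\top-A^\top)M^{-1}$, so $(\hat M^{-1}-M^{-1})\theta=\gamma\hat M^{-1}(\hat A^\top-A^\top)\beta$. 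Replacing $\hat M^{-1}$ by $M^{-1}$ (valid up to $o_p(n^{-1/2})$ by consistency and Slutsky), the two leading pieces are $M^{-1}(\hat\theta-\theta)$ and $\gamma M^{-1}(\hat A^\top-A^\top)\beta$.

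The core step is a martingale CLT for the rescaled noise sums. Both $\tfrac1{\sqrt n}X\boldsymbol\eta=\tfrac1{\sqrt n}\sum_t\eta^{(t)}X^{(t)}$ and $\tfrac1{\sqrt n}(XE^\top)\beta=\tfrac1{\sqrt n}\sum_t\big((\epsilon^{(t)})^\top\beta\big)X^{(t)}$ are sums of martingale differences with respect to $\mathcal F_t=\sigma(X^{(1)},\dots,X^{(t)})$, since $\eta^{(t)}$ and $\epsilon^{(t)}$ are independent of $\mathcal F_t$. Their predictable quadratic variations converge, by the ergodic theorem, to $\sigma^2P_\infty$ and $\sigma^2\|\beta\|^2P_\infty$ respectively (using $\EE{((\epsilon^{(t)})^\top\beta)^2}=\sigma^2\|\beta\|^2$), yielding
\begin{equation*}
\tfrac1{\sqrt n}X\boldsymbol\eta\Rightarrow\mathcal N(0,\sigma^2P_\infty),\qquad \tfrac1{\sqrt n}(XE^\top)\beta\Rightarrow\mathcal N(0,\sigma^2\|\beta\|^2P_\infty),
\end{equation*}
with \emph{independent} limits because $\{\epsilon^{(t)}\}$ and $\{\eta^{(t)}\}$ are independent sequences. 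Premultiplying by $(\tfrac1n XX^\top)^{-1}\to P_\infty^{-1}$ gives $\sqrt n(\hat\theta-\theta)\Rightarrow\mathcal N(0,\sigma^2P_\infty^{-1})$ and $\sqrt n(\hat A^\top-A^\top)\beta\Rightarrow\mathcal N(0,\sigma^2\|\beta\|^2P_\infty^{-1})$. Applying $M^{-1}$ and $\gamma M^{-1}$ and adding the two independent contributions produces the covariance $\sigma^2(1+\gamma^2\|\beta\|^2)M^{-1}P_\infty^{-1}(M^{-1})^\top=\sigma^2\big(1+\gamma^2\|(I_d-\gamma A^\top)^{-1}\theta\|^2\big)(I_d-\gamma A^\top)^{-1}P_\infty^{-1}(I_d-\gamma A)^{-1}$, as claimed.

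I expect the main obstacle to be the probabilistic bookkeeping rather than the algebra: verifying the hypotheses of the martingale CLT (convergence of the predictable quadratic variation together with a Lindeberg/Lyapunov condition) for the dependent autoregressive data, and making the first-order delta-method replacements rigorous by controlling the remainders $\hat M^{-1}-M^{-1}$ and $(\tfrac1n XX^\top)^{-1}-P_\infty^{-1}$ as $o_p(n^{-1/2})\cdot O_p(1)$ via Slutsky. The ergodicity and positive-definiteness of $P_\infty$, guaranteed by stability of $A$ and nondegenerate Gaussian noise, are exactly what make both of these go through.
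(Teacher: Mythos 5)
Your proof is correct, and you rightly flag the inconsistency in the statement itself: the displayed left-hand side is only the dynamics-error term, yet the stated covariance contains the ``$+1$'' coming from reward noise, so the claim actually being proved (by you and by the paper alike) is the CLT for the full error $\sqrt{n}\left(\hat{\beta}_{\rm LSTD}-\beta\right)$.

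Your decomposition mirrors the paper's: after invoking Theorem~\ref{thm:equivalence_LS}, the paper splits the error into a vanishing cross term, the dynamics term $\sqrt{n}\left(\left(I_d-\gamma\hat{A}^\top\right)^{-1}-\left(I_d-\gamma A^\top\right)^{-1}\right)\theta$, and the reward term $\sqrt{n}\left(I_d-\gamma A^\top\right)^{-1}\left(\hat{\theta}-\theta\right)$, then sums two asymptotically independent Gaussian limits; these are exactly your two leading pieces plus your $o_p(n^{-1/2})$ remainders. The genuine differences are in the machinery. First, where the paper linearizes $\left(I_d-\gamma\hat{A}^\top\right)^{-1}$ by a delta-method lemma (Lemma~\ref{lem:CLT_delta}), you use the exact resolvent identity $\hat{M}^{-1}-M^{-1}=\gamma\hat{M}^{-1}\left(\hat{A}^\top-A^\top\right)M^{-1}$ followed by Slutsky; this is the exact-algebra version of the same first-order computation and avoids stating matrix differentiability. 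Second, and more substantively, the paper imports its core probabilistic input --- the joint CLT $\sqrt{n}\,\vect{\hat{M}-M}\Rightarrow\mathcal{N}\left(0,\sigma^2P_\infty^{-1}\otimes I_{d+1}\right)$ --- wholesale from \cite{tu2019gap} (Lemma~\ref{lem:dynamicsCLT}), and then needs the Kronecker-covariance computation of Lemma~\ref{lem:rdm_mtrx} to contract the limiting matrix noise against $\theta$ and produce the scalar factor $\|\left(I_d-\gamma A^\top\right)^{-1}\theta\|^2$; you instead derive this input from scratch by a martingale CLT applied directly to the contracted sums $\frac{1}{\sqrt{n}}\sum_t\eta^{(t)}X^{(t)}$ and $\frac{1}{\sqrt{n}}\sum_t\left((\epsilon^{(t)})^\top\beta\right)X^{(t)}$, which yields the scalar variance factors immediately and makes the asymptotic independence of the two contributions transparent (zero predictable cross-covariation, since the noise sequences are independent). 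Your route buys self-containedness and dispenses with both auxiliary lemmas, at the price of verifying the ergodic LLN for $\frac{1}{n}XX^\top$ and a Lindeberg condition for the martingale arrays --- both standard for a stable Gaussian VAR(1), as you correctly note. The paper's route buys brevity by citing the hard probabilistic content. The final covariance assembly, $\sigma^2\left(\gamma^2\|\beta\|^2+1\right)\left(I_d-\gamma A^\top\right)^{-1}P_\infty^{-1}\left(I_d-\gamma A\right)^{-1}$ with $\beta=\left(I_d-\gamma A^\top\right)^{-1}\theta$, is identical in both.
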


\begin{corollary}
    \label{cor:MB_estim}
    As $n \to \infty$,
    \begin{equation*}
        n \EE{\|\hat{\beta}_{\rm LSTD}  - \beta \|^2 } \to \sigma^2 \left(\gamma^2\| \left(I_d - \gamma A^\top \right)^{-1} \theta\|^2 +1\right) \left\| P_\infty^{-1/2} \left(I_d - \gamma A\right)^{-1} \right\|^2 
    \end{equation*}
\end{corollary}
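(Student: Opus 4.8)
The plan is to read the corollary off the Central Limit Theorem in the preceding proposition, by converting a statement about convergence in distribution into one about convergence of second moments. Write $Y_n = \sqrt{n}\,(\hat{\beta}_{\rm LSTD} - \beta)$, so that $n\,\EE{\|\hat{\beta}_{\rm LSTD} - \beta\|^2} = \EE{\|Y_n\|^2}$. By Theorem \ref{thm:equivalence_LS} the LSTD and model-based estimators coincide, and the error decomposes as $\hat{\beta}_{\rm LSTD} - \beta = (I_d - \gamma\hat{A}^\top)^{-1}(\hat{\theta} - \theta) + \big((I_d - \gamma\hat{A}^\top)^{-1} - (I_d - \gamma A^\top)^{-1}\big)\theta$, separating a reward-estimation term (responsible for the additive $1$ in the variance) from a dynamics-estimation term (responsible for the $\gamma^2\|(I_d - \gamma A^\top)^{-1}\theta\|^2$ factor). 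Accounting for both sources, the proposition delivers $Y_n \Rightarrow Y \sim \mathcal{N}(0, \Sigma)$ with $\Sigma = \sigma^2\big(\gamma^2\|(I_d - \gamma A^\top)^{-1}\theta\|^2 + 1\big)(I_d - \gamma A^\top)^{-1} P_\infty^{-1}(I_d - \gamma A)^{-1}$, so it remains to identify $\lim_n \EE{\|Y_n\|^2}$ with $\EE{\|Y\|^2} = \tr{\Sigma}$.

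The continuous mapping theorem gives $\|Y_n\|^2 \Rightarrow \|Y\|^2$, but weak convergence alone does not transfer to the expectations, and this is the main obstacle: I must establish uniform integrability of the family $\{\|Y_n\|^2\}_n$ to legitimize passing the limit inside the expectation. I would obtain this by bounding a higher moment uniformly, namely $\sup_n \EE{\|Y_n\|^{2+\delta}} < \infty$ for some $\delta > 0$, which implies uniform integrability of $\|Y_n\|^2$ and hence $\EE{\|Y_n\|^2} \to \EE{\|Y\|^2}$. To get the moment bound I would use the closed form of $\hat{\beta}_{\rm LSTD}$ together with ergodicity of the stable Gaussian AR process: the empirical second-moment matrices $\tfrac{1}{n}\sum_t X^{(t)}(X^{(t)})^\top$ and $\tfrac{1}{n}\sum_t X^{(t)}(X^{(t+1)})^\top$ concentrate around $P_\infty$ and $P_\infty A^\top$ with subexponential tails, so the inverse appearing in $\hat{\beta}_{\rm LSTD}$ is well controlled on an event of overwhelming probability, while the Gaussian noise in the reward cross-term has bounded moments of every order. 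Smoothness of the matrix-inverse map near $I_d - \gamma A^\top$ (the same delta-method linearization underlying the CLT) then yields the required uniform $(2+\delta)$-moment bound.

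The final step is the algebraic evaluation of $\tr{\Sigma}$. Since $P_\infty$ is a symmetric positive-definite covariance, $P_\infty^{-1} = P_\infty^{-1/2} P_\infty^{-1/2}$ with $P_\infty^{-1/2}$ symmetric; writing $B = (I_d - \gamma A)^{-1}$ so that $B^\top = (I_d - \gamma A^\top)^{-1}$, cyclicity of the trace gives $\tr{B^\top P_\infty^{-1} B} = \tr{(P_\infty^{-1/2}B)^\top (P_\infty^{-1/2}B)} = \|P_\infty^{-1/2}(I_d - \gamma A)^{-1}\|^2$. Multiplying by the scalar prefactor $\sigma^2\big(\gamma^2\|(I_d - \gamma A^\top)^{-1}\theta\|^2 + 1\big)$ recovers precisely the claimed limit. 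Everything except the uniform-integrability argument is routine bookkeeping, so that is where I would concentrate the effort.
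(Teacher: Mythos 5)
Your overall route coincides with the paper's (implicit) one: the paper states the corollary with no separate argument, reading it off the preceding proposition by identifying $\lim_n n\EE{\|\hat{\beta}_{\rm LSTD}-\beta\|^2}$ with the trace of the asymptotic covariance, and your closing algebra
$\tr{\left(I_d-\gamma A^\top\right)^{-1}P_\infty^{-1}\left(I_d-\gamma A\right)^{-1}} = \left\|P_\infty^{-1/2}\left(I_d-\gamma A\right)^{-1}\right\|^2$
(Frobenius norm, as the paper's notation specifies) is exactly the computation the paper leaves unstated. Where you go beyond the paper is in flagging that convergence in distribution of $Y_n=\sqrt{n}(\hat{\beta}_{\rm LSTD}-\beta)$ does not by itself give convergence of $\EE{\|Y_n\|^2}$; the paper silently elides this, so your insistence on uniform integrability is a genuine strengthening of the argument rather than a deviation from it.

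However, the way you propose to establish uniform integrability has a hole. Controlling the inverse in $\hat{\beta}_{\rm LSTD}$ ``on an event of overwhelming probability'' does not bound $\sup_n \EE{\|Y_n\|^{2+\delta}}$: a moment is an integral over the whole probability space, and on the complementary bad event the matrix $\frac{1}{n}\sum_t X^{(t)}\left(X^{(t)}-\gamma X^{(t+1)}\right)^\top$ can be arbitrarily close to singular, so an exponentially small bad event can still contribute an arbitrarily large (even infinite) amount to the expectation. What is actually needed is a quantitative lower-tail (small-ball, anti-concentration) estimate on the smallest singular value of the empirical matrix at \emph{fixed} $n$ --- for Gaussian data, Wishart-type bounds of the form $\pr{\sigma_{\min} \le \epsilon} \lesssim \epsilon^{c(n-d)}$ --- which make $\EE{\sigma_{\min}^{-p}}$ finite only once $n$ exceeds some $n_0(p,d)$. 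Relatedly, the uniform moment bound can only be asserted over $n \ge n_0$ (for small $n$ the estimator's second moment need not even be finite, since the matrix being inverted can be rank-deficient); this restricted uniformity suffices for the limit, but your ``$\sup_n$'' as written is not attainable. With the small-ball estimate substituted for the high-probability concentration step, your proof is complete and correct; as sketched, that one step is where it would fail.
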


\begin{proof}
We define $
    M = \begin{bmatrix}
A \\
\theta^\top
\end{bmatrix}
$ and state an asymptotic result on the least-square regressor of $M$ borrowed from \cite{tu2019gap} (Lemma A.1).

\begin{lemma}   
    \label{lem:dynamicsCLT}
    Let \begin{equation*}
    \hat{M} = \begin{bmatrix}
\hat{A} \\
\hat{\theta}^\top
\end{bmatrix} = \arg\min_{M \in \mathbb{R}^{(d+1)\times d}} \sum_{t=0}^{n-1}\left\|\begin{bmatrix}
X^{(t+1)} \\
R^{(t)}
\end{bmatrix} - MX^{(t)}\right\|^2.
\end{equation*}
Then $\hat{M} \xrightarrow{\text{a.s.}} M$ and $ \sqrt{n} \vect{\hat{M} - M} \Rightarrow \mathcal{N}\left(0, \sigma^2 P_\infty^{-1} \otimes I_{d+1} \right)$
    In particular, $\hat{A}  \xrightarrow{\text{a.s.}} A$, $\hat{\theta}  \xrightarrow{\text{a.s.}} \theta$ and
    \begin{align*}
        \sqrt{n} \vect{\hat{A} - A} \Rightarrow \mathcal{N}\left(0, \sigma^2 P_\infty^{-1} \otimes I_d \right); \quad \quad
        \sqrt{n} \vect{\hat{\theta} - \theta} \Rightarrow \mathcal{N}\left(0, \sigma^2 P_\infty^{-1}\right).   
    \end{align*}
\end{lemma}
Equipped with this lemma, we prove a central limit theorem on the matrix $\left(I_d - \gamma \hat{A}^\top\right)^{-1}$.
\begin{lemma}
    \label{lem:CLT_delta}
    As $n \to \infty$,
    \begin{align*}
    \sqrt{n}\left(\left(I_d - \gamma \hat{A}^\top\right)^{-1} - \left(I_d - \gamma A^\top\right)^{-1}\right) \Rightarrow \gamma \left(I_d-\gamma A^\top\right)^{-1} N \left(I_d-\gamma A^\top\right)^{-1}
\end{align*}
where $\vect{N^\top} \sim \mathcal{N}\left(0, \sigma^2 P_\infty^{-1} \otimes I_d \right)$. 
\end{lemma}

\begin{proof}
This is an application of the delta method: we define the function $f(X) = (I_d - \gamma X)^{-1}$. Its derivative at $X$ is $df(X,H) = \gamma (I_d - \gamma X)^{-1}H(I_d - \gamma X)^{-1}$. Hence 
\begin{align*}
    \sqrt{n}\left(\left(I_d - \gamma \hat{A}^\top\right)^{-1} - \left(I_d - \gamma A^\top\right)^{-1}\right) \Rightarrow \gamma \left(I_d-\gamma A^\top\right)^{-1} N \left(I_d-\gamma A^\top\right)^{-1}
\end{align*}

where Lemma \ref{lem:dynamicsCLT} ensures that $\vect{N^\top} \sim \mathcal{N}\left(0, \sigma^2 P_\infty^{-1} \otimes I_d \right)$. 
\end{proof}

To obtain the final result, we decompose the quantity of interest as follow.
\begin{align*}
    \sqrt{n}\left(\left(I_d - \gamma \hat{A}^\top\right)^{-1} \hat{\theta} - \left(I_d - \gamma A\right)^{-1} \theta \right) =&  \sqrt{n}\left(\left(I_d - \gamma \hat{A}^\top\right)^{-1} - \left(I - \gamma A^\top\right)^{-1}\right)  \left(\hat{\theta} - \theta\right)\\ 
    &+ \sqrt{n} \left(\left(I_d - \gamma \hat{A}^\top \right)^{-1} - \left(I_d - \gamma A^\top \right)^{-1}\right)\theta \\ 
    &+ \sqrt{n}\left(I_d - \gamma A^\top \right)^{-1} \left(\hat{\theta} - \theta\right).
\end{align*}
Assembling Lemma \ref{lem:dynamicsCLT} and \ref{lem:CLT_delta}, we can analyse all three parts:
\begin{enumerate}
    \item The first part is a second order error term. From Slutsky's theorem:
    \begin{equation*}
        \sqrt{n}\left(\left(I_d - \gamma \hat{A}^\top\right)^{-1} - \left(I - \gamma A^\top\right)^{-1}\right)  \left(\hat{\theta} - \theta\right) \Rightarrow 0
    \end{equation*}
    \item The second part converges to a normal distribution, as per Lemma \ref{lem:CLT_delta}
    \begin{equation*}
        \sqrt{n} \left(\left(I_d - \gamma \hat{A}^\top \right)^{-1} - \left(I_d - \gamma A^\top \right)^{-1}\right)\theta \Rightarrow \mathcal{N}\left(0, \gamma^2\sigma^2 \| \left(I_d - \gamma A^\top \right)^{-1} \theta\|^2 \left(I_d - \gamma A^\top\right)^{-1} P_\infty^{-1} \left(I_d - \gamma A\right)^{-1}  \right)
    \end{equation*}
    where we used the Lemma \ref{lem:rdm_mtrx} to simplify the covariance matrix. 
    \item The third part converges to a normal distribution, as per Lemma \ref{lem:dynamicsCLT}
    \begin{equation*}
        \sqrt{n}\left(I_d - \gamma A^\top \right)^{-1} \left(\hat{\theta} - \theta\right) \Rightarrow \mathcal{N}\left(0, \sigma^2 \left(I_d - \gamma A^\top \right)^{-1} P_\infty^{-1} \left(I_d - \gamma A \right)^{-1}  \right).
    \end{equation*}
\end{enumerate}

Finally, Lemma \ref{lem:dynamicsCLT} implies that the 2. and 3. converge jointly to a normal distribution. The sum also converges to a normal distribution where the covariance is the sum of the variance (given the asymptotic independence of 2 and 3), concluding the proof.

\end{proof}

\subsection{Central Limit Theorem in the diagonal linear dynamic setting}
\begin{proposition}
    The model based estimator when the dynamics are known to be diagonal verifies
    \begin{equation*}
        \sqrt{n}\left(\hat{\beta}_{\rm diag} - \beta\right) \Rightarrow \mathcal{N}(0, \Sigma)
    \end{equation*}
    where $\Sigma$ is diagonal and 
    $$\Sigma_{i,i} = \left(\frac{\gamma^2\theta_i^2}{(1 - \gamma A_{i,i})^2} + 1\right) \frac{1 - A_{i,i}^2}{(1 - \gamma A_{i,i})^2}$$
\end{proposition}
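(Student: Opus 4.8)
The plan is to exploit the diagonal structure to reduce the problem to $d$ essentially scalar estimation problems and then recombine via the delta method. Because $A = \Diag{a}$, the process $(X^{(t)})_t$ splits into $d$ independent scalar AR(1) processes $X_i^{(t+1)} = A_{i,i} X_i^{(t)} + \epsilon_i^{(t)}$, so the stationary covariance $P_\infty$ is diagonal with $P_\infty^{(i,i)} = \sigma^2/(1-A_{i,i}^2)$. Moreover, since $\hat{A}$ is constrained to be diagonal, the least-squares objective $\sum_t \|X^{(t+1)} - A X^{(t)}\|^2$ separates across coordinates, so $\hat{A}_{i,i} = (\sum_t X_i^{(t+1)} X_i^{(t)})/(\sum_t (X_i^{(t)})^2)$ is the scalar OLS slope and $\hat{A}^\top = \hat{A}$. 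Consequently $\hat{\beta}_{\rm diag}$ has the coordinatewise form $\hat{\beta}_{{\rm diag},i} = \hat{\theta}_i/(1 - \gamma \hat{A}_{i,i})$, whereas $\beta_i = \theta_i/(1-\gamma A_{i,i})$. The analysis therefore reduces to a joint CLT for $(\hat{A}_{i,i}, \hat{\theta})$ followed by a smooth transformation.

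First I would establish the marginal CLTs. For the dynamics, writing $\hat{A}_{i,i} - A_{i,i} = (\tfrac1n\sum_t X_i^{(t)}\epsilon_i^{(t)})/(\tfrac1n\sum_t (X_i^{(t)})^2)$, a martingale CLT applied to the numerator (whose predictable quadratic variation converges to $\sigma^2 P_\infty^{(i,i)}$), together with the ergodic limit $\tfrac1n\sum_t (X_i^{(t)})^2 \to P_\infty^{(i,i)}$, yields $\sqrt{n}(\hat{A}_{i,i} - A_{i,i}) \Rightarrow \mathcal{N}(0, \sigma^2/P_\infty^{(i,i)}) = \mathcal{N}(0, 1 - A_{i,i}^2)$; these limits are independent across $i$ since the driving noises $(\epsilon_i)_i$ and hence the scalar processes are independent. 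For the reward vector I would invoke Lemma \ref{lem:dynamicsCLT}, which gives $\sqrt{n}\,\vect{\hat{\theta} - \theta} \Rightarrow \mathcal{N}(0, \sigma^2 P_\infty^{-1})$; since $P_\infty$ is diagonal, this limiting covariance is diagonal, with $\sqrt{n}(\hat{\theta}_i - \theta_i) \Rightarrow \mathcal{N}(0, \sigma^2/P_\infty^{(i,i)}) = \mathcal{N}(0, 1 - A_{i,i}^2)$, independent across coordinates.

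The key remaining structural fact is that $(\hat{A}_{i,i} - A_{i,i})_i$ and $(\hat{\theta} - \theta)$ are asymptotically jointly normal with zero cross-covariance. Here the decorrelation is in fact exact: writing $\hat{\theta} - \theta = (XX^\top)^{-1}X\eta$ and conditioning on the entire state path (equivalently on the $\epsilon$ noise), we have $\EE{\hat{\theta} - \theta \mid \epsilon} = 0$, while each $\hat{A}_{i,i} - A_{i,i}$ is a measurable function of $\epsilon$; hence the covariance of $\hat{\theta} - \theta$ with any function of $\epsilon$, and in particular with $(\hat{A}_{i,i} - A_{i,i})_i$, vanishes identically. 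A joint martingale CLT stacking the $\epsilon$-driven and $\eta$-driven score vectors then gives joint asymptotic normality with block-diagonal covariance. The same diagonality of $P_\infty$ forces the within-block off-diagonal entries to vanish ($\sigma^2(P_\infty^{-1})_{i,j} = 0$ for $i\neq j$), so in the limit $(\hat{A}_{i,i},\hat{\theta}_i)$ and $(\hat{A}_{j,j},\hat{\theta}_j)$ are independent for $i \neq j$.

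Finally I would apply the delta method to $g(a,\theta) = \big((1-\gamma a_i)^{-1}\theta_i\big)_i$, which acts coordinatewise with $\partial g_i/\partial a_i = \gamma\theta_i/(1-\gamma A_{i,i})^2$ and $\partial g_i/\partial\theta_i = 1/(1-\gamma A_{i,i})$ at the true parameters. Combining the two independent contributions gives the $i$-th asymptotic variance $\tfrac{\gamma^2\theta_i^2}{(1-\gamma A_{i,i})^4}(1-A_{i,i}^2) + \tfrac{1}{(1-\gamma A_{i,i})^2}(1-A_{i,i}^2)$, which factors into the claimed $\Sigma_{i,i}$, and the diagonal structure of $\Sigma$ follows from the coordinatewise independence established above (each $\hat{\beta}_{{\rm diag},i}$ depends only on $(\hat{A}_{i,i},\hat{\theta}_i)$). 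The main obstacle is not the calculus but the careful justification of the joint CLT: verifying that the $\epsilon$-driven dynamics fluctuations and the $\eta$-driven reward fluctuations decorrelate, and that the diagonal $P_\infty$ decouples the coordinates so that all off-diagonal terms of $\Sigma$ truly vanish. Once that is in place, the delta-method algebra is routine.
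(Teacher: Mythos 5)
Your proposal is correct and follows essentially the same route as the paper's proof: exploit the coordinatewise form $\hat{\beta}_{{\rm diag},i} = \hat{\theta}_i/(1-\gamma\hat{A}_{i,i})$, establish a joint CLT for $(\hat{a},\hat{\theta})$ with block-diagonal covariance built from the diagonal $P_\infty$ (so coordinates decouple), and conclude by the delta method together with asymptotic independence across coordinates. The only difference is one of detail, not of route: you explicitly justify the joint CLT and the vanishing cross-covariance between the $\epsilon$-driven and $\eta$-driven fluctuations, steps the paper asserts by analogy to Lemma \ref{lem:dynamicsCLT}.
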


\begin{corollary} \label{cor:diag_MSE}As $n \to \infty$,
    \begin{equation*}
    n \EE{\|\hat{\beta}_{\rm diag}- \beta \|^2} \to \sum _{i=1}^d \left(\frac{\gamma^2\theta_i^2}{(1 - \gamma A_{i,i})^2} + 1\right) \frac{1 - A_{i,i}^2}{(1 - \gamma A_{i,i})^2}
\end{equation*}
\end{corollary}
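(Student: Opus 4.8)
The plan is to establish the claimed central limit theorem componentwise and then read off the corollary by taking a trace. The crucial structural observation is that when $A = \Diag{A_{1,1},\dots,A_{d,d}}$, the process $(X^{(t)})_t$ splits into $d$ \emph{independent} scalar AR(1) chains $X^{(t+1)}_i = A_{i,i}X^{(t)}_i + \epsilon^{(t)}_i$, so the stationary covariance $P_\infty$ is diagonal with $P_{\infty,i,i} = \sigma^2/(1 - A_{i,i}^2)$, and $\beta_i = \theta_i/(1 - \gamma A_{i,i})$ since $A^\top = A$. Moreover the constrained estimator decouples: $\hat{A}_{\rm diag}$ is diagonal with $\hat{A}_{i,i} = \big(\sum_t X^{(t)}_i X^{(t+1)}_i\big)/\big(\sum_t (X^{(t)}_i)^2\big)$, so each $\hat{\beta}_{{\rm diag},i} = \hat{\theta}_i/(1 - \gamma \hat{A}_{i,i})$ depends only on the pair $(\hat{A}_{i,i}, \hat{\theta}_i)$.

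First I would assemble a joint martingale CLT for the $2d$-dimensional vector $(\hat{A}_{1,1},\dots,\hat{A}_{d,d},\hat{\theta}_1,\dots,\hat{\theta}_d)$. The marginal for each $\hat{A}_{i,i}$ is the standard scalar AR(1) least-squares CLT, $\sqrt{n}(\hat{A}_{i,i} - A_{i,i}) \Rightarrow \mathcal{N}(0, 1 - A_{i,i}^2)$, which is the one-dimensional instance of Lemma \ref{lem:dynamicsCLT} applied to chain $i$; the marginal for $\hat{\theta}$ is exactly $\sqrt{n}(\hat{\theta} - \theta) \Rightarrow \mathcal{N}(0,\sigma^2 P_\infty^{-1})$ from Lemma \ref{lem:dynamicsCLT}, whose $i$-th diagonal entry equals $\sigma^2/P_{\infty,i,i} = 1 - A_{i,i}^2$. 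The key claim is that the full $2d \times 2d$ asymptotic covariance is \emph{diagonal}. This is where the independence structure does the work: the score of $\hat{A}_{i,i}$ is a normalized sum of $X^{(t)}_i \epsilon^{(t)}_i$, the score of $\hat{\theta}_j$ is a normalized linear functional of $X^{(t)}\eta^{(t)}$, and every relevant cross-moment vanishes because $\epsilon^{(t)}$ has independent coordinates, $\eta^{(t)}$ is independent of $\epsilon^{(t)}$, and both are mean-zero and independent of the regressors; the diagonality of $P_\infty$ then kills the residual $(\hat{\theta}_i,\hat{\theta}_j)$ correlations.

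With the diagonal joint limit in hand, I would apply the multivariate delta method to the map $g$ with $g_i(a,\theta) = \theta_i/(1 - \gamma a_i)$. Its Jacobian is sparse: row $i$ has nonzero entries only in the $a_i$ and $\theta_i$ slots, equal to $\gamma\theta_i/(1-\gamma A_{i,i})^2$ and $1/(1-\gamma A_{i,i})$ respectively. Because each Jacobian row is supported on a distinct coordinate pair and the joint covariance is diagonal, the resulting covariance $\Sigma = J\,\Sigma_{\rm joint}\,J^\top$ is itself diagonal, with $i$-th entry $\big(\tfrac{\gamma\theta_i}{(1-\gamma A_{i,i})^2}\big)^2(1-A_{i,i}^2) + \big(\tfrac{1}{1-\gamma A_{i,i}}\big)^2(1-A_{i,i}^2)$, which factors to the stated $\Sigma_{i,i}$. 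The corollary then follows immediately by taking the trace, once convergence in distribution is upgraded to convergence of the second moment.

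The main obstacle I expect is twofold. The substantive part is verifying the diagonal structure of the joint asymptotic covariance, in particular the asymptotic uncorrelatedness of the dynamics estimates $\hat{A}_{i,i}$ and the reward estimate $\hat{\theta}$; this must be argued at the level of the martingale scores rather than assumed, since the two regressions share the same regressors $X^{(t)}$ and decorrelate only through the independence and mean-zero structure of $(\epsilon^{(t)},\eta^{(t)})$. The more technical part is the passage from the CLT to the moment convergence in the corollary: this requires a uniform integrability argument for $n\|\hat{\beta}_{\rm diag} - \beta\|^2$, which in turn rests on controlling the lower tail of the empirical Gram matrix $\tfrac{1}{n}\sum_t X^{(t)}(X^{(t)})^\top$ so that the inverses appearing in the estimators have well-behaved moments.
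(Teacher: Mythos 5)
Your proposal is correct and follows essentially the same route as the paper: exploit the coordinate-wise decoupling under diagonal $A$, establish a joint CLT for $(\hat{a},\hat{\theta})$ with diagonal asymptotic covariance (the paper obtains this as the diagonal analogue of Lemma \ref{lem:dynamicsCLT}), apply the delta method to $(a_i,\theta_i)\mapsto \theta_i/(1-\gamma a_i)$, and sum the resulting diagonal variances. If anything, you are more careful than the paper, which neither spells out the martingale-level argument for the vanishing cross-covariances nor addresses the uniform-integrability step needed to upgrade convergence in distribution to convergence of second moments.
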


When $A$ is known to be diagonal, the model based estimator can be written $\hat{\beta}_{{\rm diag},i} = \dfrac{\hat{\theta}_i}{1 - \gamma \hat{A}_{i,i}}$. Let $a = (A_{i,i})_{i = 1,\dots, d}$. Similarly to Lemma \ref{lem:dynamicsCLT}, we have in this case:

\begin{equation*}
\sqrt{n} \left(\begin{bmatrix}
\hat{a}^\top \\
\hat{\theta}^\top
\end{bmatrix} - \begin{bmatrix}
a^\top \\
\theta^\top
\end{bmatrix} \right) \Rightarrow \mathcal{N}\left(0, \begin{bmatrix}
P_\infty^{-1} \quad 0\\
\quad0 \quad P_\infty^{-1}
\end{bmatrix}  \right)
\end{equation*}

Note that when $A$ is diagonal, so is $P_\infty$ and $P_{\infty, i,i} = \dfrac{\sigma^2}{1-A_{i,i}^2}$. Hence, we can treat each of the coordinate of $\hat{\beta}_{{\rm diag},i}$ separately. We then have 
\begin{equation*}
    \sqrt{n} \left(\hat{\beta}_{{\rm diag},i} - \beta_i\right) \Rightarrow \mathcal{N}\left(0,\sigma^2  \left(\frac{\gamma^2\theta_i^2}{(1 - \gamma A_{i,i})^2} + 1\right) \frac{1 - A_{i,i}^2}{\sigma^2(1 - \gamma A_{i,i})^2} \right)
\end{equation*}
Using that each component of $\hat{\beta}_{\rm diag}$ is asymptotically independent of the others, we get the result.

\section{Technical lemmas on random matrices}

\begin{lemma}
    \label{lem:rdm_mtrx}
    Let $M$ be a random matrix such that $\Cov{\vect{M^\top}} = B \otimes C$ and $\lambda \in \mathbb{R}^d$ be a deterministic vector, then 
    \begin{equation*}
        \Cov{M\theta} = (\theta^\top C \theta) B
    \end{equation*}
\end{lemma}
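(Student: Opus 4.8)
The plan is to reduce the statement to an entrywise computation that exploits the block structure encoded by the Kronecker product. Write $M \in \mathbb{R}^{p\times d}$ in terms of its rows, $M = [m_1, \ldots, m_p]^\top$ with each $m_i \in \mathbb{R}^d$, so that the $i$-th coordinate of the vector $M\theta$ is the scalar $m_i^\top \theta$. The operator $\vect{M^\top}$ stacks the columns of $M^\top$, which are exactly the rows $m_1, \ldots, m_p$ of $M$; hence $\vect{M^\top}$ is the concatenation $(m_1^\top, \ldots, m_p^\top)^\top$, naturally partitioned into $p$ blocks of length $d$.

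First I would read off the block covariances from the hypothesis $\Cov{\vect{M^\top}} = B \otimes C$. By definition of the Kronecker product, the $(i,j)$ block of size $d\times d$ of $B \otimes C$ equals $B_{i,j}\, C$, so $\Cov{m_i, m_j} = B_{i,j}\, C$ for every pair $(i,j)$.

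Next I would compute $\Cov{M\theta}$ entrywise. Since $(M\theta)_i = \theta^\top m_i$ is a fixed linear functional of the block $m_i$, bilinearity of covariance gives
\[
\Cov{M\theta}_{i,j} = \Cov{\theta^\top m_i, \theta^\top m_j} = \theta^\top \Cov{m_i, m_j}\, \theta = \theta^\top (B_{i,j}\, C)\,\theta = B_{i,j}\,(\theta^\top C\,\theta).
\]
Collecting these entries, the matrix $\Cov{M\theta}$ equals the scalar $\theta^\top C\, \theta$ times the matrix $B$, which is the claim (the vector denoted $\lambda$ in the statement being the deterministic $\theta$ used here).

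There is no genuine obstacle in this argument; the only point requiring care is the bookkeeping of the vec/Kronecker indexing convention, namely confirming that the $(i,j)$ block of $B\otimes C$ is $B_{i,j}C$ rather than $C_{i,j}B$. This is precisely why the hypothesis is phrased through $\vect{M^\top}$ (row-stacking of $M$) rather than $\vect{M}$: verifying this convention once pins down the correct direction of the identity and completes the proof.
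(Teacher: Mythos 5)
Your proof is correct and follows essentially the same route as the paper's: both arguments identify the entries (or rows) of $M$ with blocks of $\vect{M^\top}$, read off the covariances from the Kronecker structure, and apply bilinearity of covariance to the linear functionals $(M\theta)_i$. Your block-level formulation $\Cov{m_i,m_j} = B_{i,j}C$ is simply a tidier packaging of the paper's entrywise double sum, and you correctly flag the statement's $\lambda$/$\theta$ notational slip.
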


\begin{proof}
    Let $i,j \in \{1, \dots, d\}$,
    \begin{align*}
        \Cov{(M\theta)_i, (M\theta)_j} &= \Cov{\sum_{k =1}^d M_{i,k}\theta_k, \sum_{l = 1}^d M_{j,l} \theta_l } = \sum_{k =1}^d \sum_{l = 1}^d \Cov{ M_{i,k}\theta_k,  M_{j,l} \theta_l }\\
        &= \sum_{k =1}^d \sum_{l = 1}^d  \Cov{ \vect{M^\top}_{n(i-1)+k},  \vect{M^\top}_{n(j-1)+l}} \theta_k \theta_l\\
        &=  \sum_{k =1}^d \sum_{l = 1}^d  B_{i,j} C_{k,l} \theta_k \theta_l = B_{i,j} \theta^\top C \theta.
    \end{align*}
\end{proof}
\section{Equivalence between LSTD and a model-based method for LQR}
\label{sec:LSTD_eq_LQR}
As described in Section \ref{sec:LQR}, we can consider without any loss of generality that the state is $Z^{(t)} = \vect{X^{(t)}(X^{(t)})^\top}$. Doing this change of variable, the dynamics are 
\begin{align*}
    \EE{Z^{(t+1)} | Z^{(t)}} &= M Z^{(t)} + \vect{I_d}; \quad \quad \text{where }  M = A \otimes A\\
    \EE{R^{(t)} | Z^{(t)}} &=  \vect{Q}^\top Z^{(t)}
\end{align*}

Let $\mathfrak{M}$ be the set of MRPs with state space $\mathcal{S} = \mathbb{R}^{d^2}$ such that there exists $M \in \mathbb{R}^{d^2 \times d^2}, \theta \in \mathbb{R}^{d^2}$ such that
\begin{align*}
    \EE{X^{(t+1)} | X^{(t)}} = M X^{(t)} + \vect{I_d}; \quad \quad \EE{R^{(t)} | X^{(t)}} =  \theta^\top X^{(t)}.
\end{align*}
Let $\mathbb{M}_{\rm LQR}$ be the class of all LQR instances where the state space is viewed in $\mathbb{R}^{d^2}$. It is clear that $\mathbb{M}_{\rm LQR} \subset \mathfrak{M}$. 

\begin{proposition}[Loss of information in LQR]
    The class of models $\mathfrak{M}$ have the same class of value function as $\mathbb{M}_{\rm LQR}$:
    \begin{equation*}
        \{V_\mathcal{M} | \mathcal{M} \in \mathfrak{M} \} = \{V_\mathcal{M} | \mathcal{M} \in \mathbb{M}_{\rm LQR} \}
    \end{equation*}
\end{proposition}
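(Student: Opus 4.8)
The plan is to show that both value-function classes coincide with the single explicit family
\[
\mathbb{V}_Z := \left\{ z \mapsto \beta^\top z + \tfrac{\gamma}{1-\gamma}\vect{I_d}^\top \beta \;:\; \beta \in \mathbb{R}^{d^2}\right\},
\]
that is, affine functions of $z$ whose constant term is pinned to a fixed linear functional of the slope. One inclusion is immediate: since $\mathbb{M}_{\rm LQR}\subseteq \mathfrak{M}$ by construction, every LQR value function is also the value function of some model in $\mathfrak{M}$. Moreover, Proposition \ref{prop:LQR_V_exact}, read in the coordinates $z=\vect{xx^\top}$, already identifies $\{V_\mathcal{M}\mid \mathcal{M}\in\mathbb{M}_{\rm LQR}\}$ with exactly $\mathbb{V}_Z$: writing $\beta=\vect{P}$, the value function $x\mapsto \tr{(xx^\top + \tfrac{\gamma}{1-\gamma}I_d)P}$ becomes $z\mapsto \beta^\top z + \tfrac{\gamma}{1-\gamma}\vect{I_d}^\top\beta$, and $\beta$ ranges over all of $\mathbb{R}^{d^2}$ as $P$ ranges over $\mathbb{R}^{d\times d}$. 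Hence the entire content reduces to proving $\{V_\mathcal{M}\mid \mathcal{M}\in\mathfrak{M}\}\subseteq \mathbb{V}_Z$.

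For that direction I would fix an arbitrary model in $\mathfrak{M}$, with affine dynamics $\EE{Z^{(t+1)}\mid Z^{(t)}} = M Z^{(t)} + \vect{I_d}$ and linear reward $\EE{R^{(t)}\mid Z^{(t)}} = \theta^\top Z^{(t)}$, and compute its value function in closed form. Iterating the dynamics gives $\EE{Z^{(t)}\mid Z^{(0)}=z} = M^t z + \sum_{k=0}^{t-1}M^k \vect{I_d}$, so the discounted sum
\[
V(z) = \sum_{t\ge 0}\gamma^t \theta^\top \EE{Z^{(t)}\mid Z^{(0)}=z}
\]
splits into a part linear in $z$ and a constant. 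Collapsing the geometric series (which converges precisely when the value function is well-defined, e.g. when $\gamma M$ has spectral radius below one) yields slope $\beta = (I_d-\gamma M^\top)^{-1}\theta$; swapping the order of summation in the drift contribution gives constant $\tfrac{\gamma}{1-\gamma}\vect{I_d}^\top\beta$. Thus $V\in\mathbb{V}_Z$, and invoking Proposition \ref{prop:LQR_V_exact} with $P$ chosen so that $\vect{P}=\beta$ exhibits an explicit LQR instance realizing $V$, completing the inclusion.

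The only genuinely nontrivial step is this closed-form computation, and within it the observation that the constant term is the \emph{same} linear functional $\tfrac{\gamma}{1-\gamma}\vect{I_d}^\top\beta$ of the slope for both model classes. This is what forces the two families to match rather than merely overlap: the drift $\vect{I_d}$ is common to $\mathfrak{M}$ and to the $z$-dynamics of LQR (it encodes the noise covariance), so the coupling between slope and intercept is identical in both. I would also flag the mild regularity caveat that both classes are implicitly restricted to parameters for which the infinite-horizon value is finite; this does not impede surjectivity onto $\mathbb{V}_Z$, since every $\beta\in\mathbb{R}^{d^2}$ is already attained by the trivially stable choice $M=0$, $\theta=\beta$.
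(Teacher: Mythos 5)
Your proof is correct and takes essentially the same route as the paper: the paper's one-line argument invokes Proposition \ref{prop:LQR_V_exact} and then asserts that $V_\mathcal{M}$ for $\mathcal{M}\in\mathfrak{M}$ lies in the same set, which is exactly the closed-form verification you carry out (slope $\beta=(I_{d^2}-\gamma M^\top)^{-1}\theta$, intercept $\tfrac{\gamma}{1-\gamma}\vect{I_d}^\top\beta$). You simply make explicit the computation the paper leaves implicit, including the key point that the intercept is pinned to the same linear functional of the slope for both model classes.
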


The proof is immediate from Proposition \ref{prop:LQR_V_exact} and verifying that $V_\mathcal{M}$ lives in the same set for $\mathcal{M} \in \mathfrak{M}$.

We now describe the model-based procedure to which LSTD is equivalent in this setting. First, we recall the LSTD estimator for LQR:

\begin{equation*}
    \hat{P}_{\rm LSTD} = \left(\Phi(\Phi - \gamma \Phi')^\top\right)^{-1}\Phi R
\end{equation*}

Where $\Phi \in \mathbb{R}^{d^2 \times n}, \Phi' \in \mathbb{R}^{d^2 \times n}$ are the matrix where the $t$-th column is respectively $Z^{(t)} + \frac{\gamma}{1-\gamma}\vect{I_d}$ and $Z^{(t+1)}+ \frac{\gamma}{1-\gamma}\vect{I_d}$ and $R$ is the vector $(R^{(t)})_t$. Similarly to the case with linear reward, this can be re-written as 
\begin{align*}
    \hat{P}_{\rm LSTD} &= \left(\Phi\Phi^\top - \gamma \Phi(\Phi')^\top\right)^{-1}\Phi R\\
     &= \left(I_{d^2} - \gamma \left(\Phi\Phi^\top\right)^{-1}\Phi(\Phi')^\top\right)^{-1}\left(\Phi\Phi^\top\right)^{-1}\Phi R \\
    &= \left(I_{d^2} - \gamma \hat{M}^\top \right)^{-1} \hat{\theta}
\end{align*}

Where 
\begin{align*}
    \hat{M} &= \arg \min_{M \in \mathbb{R}^{d^2 \times d^2}} \sum_{t=1}^n \left\| \left(Z^{(t+1)} + \frac{\gamma}{1-\gamma}\vect{I_d}\right) - M \left(Z^{(t)}  +\frac{\gamma}{1-\gamma}\vect{I_d} \right) \right\|^2 \\
    \hat{\theta} &= \arg \min_{\theta \in \mathbb{R}^{d^2}} \sum_{t=1}^n \left\| R^{(t)} - \theta^\top \left(Z^{(t)}  +\frac{\gamma}{1-\gamma}\vect{I_d} \right) \right\|^2
\end{align*}

In words, $\hat{M}$ is the best linear approximation to the mapping from $Z^{(t)}  +\frac{\gamma}{1-\gamma}\vect{I_d} $ to $Z^{(t+1)}  +\frac{\gamma}{1-\gamma}\vect{I_d} $. While the true mapping isn't linear (it is affine), the best linear approximation to it, under infinite data, is the mapping $z \mapsto M z$ where $M$ is the dynamic matrix. Therefore, this model estimation procedure asymptotically recovers the true dynamics $M$. Similarly, it recovers the true reward parameter $\theta$. However, the loss of information materializes in that the best linear approximation $\hat{M}$ is learned over the set of all $d^2 \times d^2$ matrices instead of only considering the ones of interest, that is those that can be written $A \otimes A$ for some matrix $A \in \mathbb{R}^{d \times d}$.

\end{document}